\def\circle#1{\raisebox{.5pt}{\textcircled{\raisebox{-.9pt} {#1}}}}
\newtheorem{definition}{Definition}
\newtheorem{assumption}{Assumption}
\newtheorem{theorem}{Theorem}
\newtheorem{lemma}{Lemma}
\newtheorem{proposition}{Proposition}
\newcommand{\subparagraph}{}
 \titlespacing\section{0pt}{6pt plus 2pt minus 1pt}{4pt plus 2pt minus 1.5pt}
 \titlespacing\subsection{0pt}{4pt plus 2pt minus 1pt}{2pt plus 2pt minus 1pt}
 \titlespacing\subsubsection{10pt}{2pt plus 0pt minus 2pt}{2pt plus 0pt minus 2pt}
\titleformat{\subsubsection}[runin]{\normalfont\normalsize\itshape}{\arabic{subsubsection})}{5pt}{}[:\,\,]
\declaretheoremstyle[%
 spaceabove=0pt,%
 spacebelow=4pt,%
 headfont=\normalfont\itshape,%
 postheadspace=1em,%
 qed=\qedsymbol%
]{proofstyle} 
\declaretheorem[name={Proof},style=proofstyle,unnumbered]{prf}
\def\BibTeX{{\rm B\kern-.05em{\sc i\kern-.025em b}\kern-.08em
    T\kern-.1667em\lower.7ex\hbox{E}\kern-.125emX}}
\newcommand{\Expect}{{\rm I\kern-.3em E}}
\newcommand{\Identity}{{\rm I\kern-.2em l}}
\begin{document}

\title{Adaptive Gradient Sparsification for Efficient Federated Learning: An Online Learning Approach \vspace{-0.1in}}

\author{
\IEEEauthorblockN{Pengchao Han\IEEEauthorrefmark{1}, Shiqiang Wang\IEEEauthorrefmark{2}, Kin K. Leung\IEEEauthorrefmark{1}}
\IEEEauthorblockA{\IEEEauthorrefmark{1}Department of Electrical and Electronic Engineering, Imperial College London, UK}
\IEEEauthorblockA{\IEEEauthorrefmark{2}IBM T. J. Watson Research Center, Yorktown Heights, NY, USA}
\IEEEauthorblockA{Email: hanpengchao199@gmail.com, wangshiq@us.ibm.com, kin.leung@imperial.ac.uk}
\thanks{This paper has been accepted at IEEE ICDCS 2020.

This research was sponsored in part by the U.S. Army Research Laboratory and the U.K. Ministry of Defence under Agreement Number W911NF-16-3-0001. The views and conclusions contained in this document are those of the authors and should not be interpreted as representing the official policies, either expressed or implied, of the U.S. Army Research Laboratory, the U.S. Government, the U.K. Ministry of Defence or the U.K. Government. The U.S. and U.K. Governments are authorized to reproduce and distribute reprints for Government purposes notwithstanding any copyright notation hereon.

P. Han was a visiting student at Imperial College London when contributing to this work.}
\vspace{-0.3in}
}

\maketitle

\begin{abstract}
Federated learning (FL) is an emerging technique for training machine learning models using geographically dispersed data collected by local entities. It includes local computation and synchronization steps. To reduce the communication overhead and improve the overall efficiency of FL, \emph{gradient sparsification} (GS) can be applied, where instead of the full gradient, only a small subset of important elements of the gradient is communicated. Existing work on GS uses a fixed degree of gradient sparsity for i.i.d.-distributed data within a datacenter. In this paper, we consider adaptive degree of sparsity and non-i.i.d. local datasets. We first present a fairness-aware GS method which ensures that different clients provide a similar amount of updates. Then, with the goal of minimizing the overall training time, we propose a novel online learning formulation and algorithm for automatically determining the near-optimal \emph{communication and computation trade-off} that is controlled by the degree of gradient sparsity. The online learning algorithm uses an estimated sign of the derivative of the objective function, which gives a regret bound that is asymptotically equal to the case where exact derivative is available. Experiments with real datasets confirm the benefits of our proposed approaches, showing up to $40\%$ improvement in model accuracy for a finite training time.
\end{abstract}

\begin{IEEEkeywords}
Distributed machine learning, edge computing, federated learning, gradient sparsification, online learning
\end{IEEEkeywords}

\section{Introduction} 
\label{sec:intro}

Modern consumer and enterprise users generate a large amount of data at the network edge, such as sensor measurements from Internet of Things (IoT) devices, images captured by cameras, transaction records of different branches of a company, etc. Such data may not be shareable with a central cloud, due to data privacy regulations and communication bandwidth limitation~\cite{kairouz2019advances}.
In these scenarios, \emph{federated learning}~(FL) is a useful approach for training machine learning models from local data~\cite{mcmahan2016communication,li2019federated,park2019wireless,yang2019federated,kairouz2019advances}. The basic process of FL includes local gradient computation at clients and model weight (parameter) aggregation through a server. Instead of sharing the raw data, only model weights or gradients need to be shared between the clients and the server in the FL process. Due to the exponential increase in the speed of graphic processing units (GPUs) including mobile GPUs~\cite{GPUGrowth,MobileGPU}, it is foreseeable that FL will be widely used in distributed artificial intelligence (AI) systems in the near future.

The clients in FL\footnote{Note that the original FL concept given in~\cite{mcmahan2016communication} only focuses on the consumer setting. We consider the extended FL definition in this paper that also includes enterprise and cross-organizational settings as described in~\cite{yang2019federated,kairouz2019advances}.} can range from mobile phones in the consumer setting~\cite{mcmahan2016communication} to edge servers and micro-datacenters in the enterprise or cross-organizational setting~\cite{yang2019federated,kairouz2019advances}. 
For different FL \emph{tasks} where each task trains a separate model, the involved types of clients and their network connection can be largely different. For example, one task can involve a number of mobile phones (consumer clients) within the same city, with fast networking but slow computation; another task can involve multiple micro-datacenters (enterprise clients) across the world, with slow networking but fast computation. The computation and networking overheads also vary with different types of models, learning algorithms, hyper-parameters (e.g., mini-batch size), etc., even for the same set of involved clients. Since both communication and computation consume a certain amount of time (and other types of resources such as energy), it is important to optimize the \emph{communication and computation trade-off} to minimize the model training time in FL.

In the original FL approach known as federated averaging (FedAvg), this trade-off is adjusted by the number of local update rounds between every two communication (weight aggregation) rounds~\cite{mcmahan2016communication}. After each step of local model update with gradient descent, FedAvg either sends all the model parameters or sends nothing. A more balanced approach that sends a sparse vector with a subset of important values from the full gradient, known as \emph{gradient sparsification} (GS), has recently gained attention in distributed learning systems~\cite{ALinear-Speedup-Analysis-of-Distributed-Deep-Learning}. Compared to the ``send-all-or-nothing'' approach in FedAvg, GS provides a higher degree of freedom for controlling the communication and computation trade-off.

Nevertheless, the degree of sparsity in existing GS approaches is fixed, which is not suitable for FL where the resource consumption can differ largely depending on the task, as explained above. Even for a single learning task, it is difficult to find the best degree of sparsity manually. The optimal sparsity depends on characteristics of the FL task, as well as the communication bandwidth and computational capability. 
In addition, existing GS algorithms mainly focus on cases where data is i.i.d.-distributed at clients (workers) within the same datacenter. Non-i.i.d. data distribution that frequently occurs in FL settings due to local data collection at clients has been rarely studied in the context of GS.

We therefore have the following open questions: 1) How to determine the optimal degree of sparsity for GS? 2) How to perform GS in FL with non-i.i.d. data and is it beneficial over the conventional send-all-or-nothing approach used in FedAvg? To answer these questions, we make the following main contributions in this paper.

\begin{enumerate}
    \item We present a fairness-aware bidirectional top-$k$ GS (FAB-top-$k$) approach for FL, where the sparse gradient vector includes $k$ elements derived from the original (full) gradients of all clients, \emph{both in the uplink} (client to server) \emph{and downlink} (server to client). The value of $k$ here can be regarded as a measure of the sparsity, where a smaller $k$ corresponds to a more sparse vector and requires less communication. This approach guarantees a minimum number of gradient elements used from each client. 
    \item We propose a novel formulation of the problem of adapting $k$ to minimize the overall training time (including computation and communication) in an \emph{online learning} setting where the training time is unknown beforehand. 
    \item A new online learning algorithm based on the estimated sign of the derivative of the cost function is proposed for solving the adaptive $k$ problem, and the regret bound of this algorithm is analyzed theoretically.
    \item The proposed approaches are evaluated using extensive experiments with real datasets, showing the benefits our approaches compared to other methods.
\end{enumerate}

Note that while we focus on training time minimization for ease of presentation, our proposed algorithm can be directly extended to the minimization of other types of additive resources, such as energy, monetary cost, or a sum of them. By controlling the sparsity degree $k$, we control the communication overhead and hence the communication and computation trade-off, because the overall training time (or other resource) is split between communication and computation.

\section{Related Work}\label{sec:related_work}

Since FL was first proposed in~\cite{mcmahan2016communication}, it has found various applications in the mobile and IoT domains~\cite{ICDCS19_DIOT,GuangxuZhu-Towards-an-Intelligent-Edge}. 
To reduce the communication bottleneck, methods have been proposed to find the appropriate times/sequences to communicate in~\cite{INFOCOM19-Round-Robin-Synchronization,MG-WFBP,Poseidon,ScalingDeepLearning}, which, however, do not reduce the overall amount of data to transmit. The communication and computation trade-off is adapted in~\cite{Gaia2017,WangJSAC2019,WangSysML2019,INFOCOM19-federated-wireless}, where after each local update step, either none or all the model weights are transmitted. An approach where only a subset of clients with relevant updates send their weights is considered in~\cite{CMFL}. These send-all-or-nothing approaches (from each client's perspective) may cause bursty communication traffic and do not consider the possibility of sending a sparse vector of model weights or gradient with low communication overhead.

GS is a way of compressing the gradient or model weight vector to improve communication efficiency.
In \emph{periodic averaging GS}~\cite{ALinear-Speedup-Analysis-of-Distributed-Deep-Learning,Gradient-Sparsification,Qsparse-local-SGD}, a random subset of gradient elements are transmitted and aggregated in each round, so that after a finite number of rounds, all elements of the full gradient vector are aggregated at least once.
In \emph{top-$k$ GS}, the $k$ gradient elements with the highest absolute values are transmitted and aggregated. For $N$ clients, the downlink transmission of the \emph{unidirectional} top-$k$ approach may include as many as $kN$ values~\cite{Deep-Gradient-Compression,Sparse-Communication-for-Distributed-Gradient-Descent,hardy2017distributed,The-Convergence-of-Sparsified-Gradient-Methods,Qsparse-local-SGD,chen2018adacomp,shi2019layer}, since different clients may select elements with different indices.
To avoid this issue, a global (\emph{bidirectional}) top-$k$ GS approach is proposed in~\cite{gtopk-Sparsification,gtopk-Convergence}, where the top-$k$ elements is selected among every pair of clients in an iterative way so that the downlink transmission includes at most $k$ values.
The above GS methods mainly focus on the datacenter setting with i.i.d. data distribution.

A few works apply GS and related techniques to FL with non-i.i.d. data. A random sparsification method similar to periodic averaging GS is proposed in~\cite{konevcny2016federated2}, which generally gives worse performance than top-$k$ GS (see Section~\ref{subsec:performance-fix-k}). A variant of bidirectional top-$k$ GS combined with quantization and encoding is recently developed in~\cite{Sattler2019}. It does not consider fairness among clients and could possibly exclude some clients' updates (also see Section~\ref{subsec:performance-fix-k}), which may cause the trained model to be biased towards certain clients. With the goal of reducing both communication and computation overheads, dropout and model pruning techniques have been applied~\cite{caldas2018expanding,jiang2019model,xu2019elfish}, which, however, may converge to a non-optimal model accuracy if an improper degree of sparsity is chosen. There exist other model compression techniques such as quantization~\cite{konevcny2016federated2}, which are orthogonal to GS and can be applied together with GS. We focus on GS in this paper.

In addition to the limitations mentioned above, most existing works on GS, model compression, and their variants (including those mentioned above) consider a fixed degree of sparsity. A few recent works consider thresholding-based adaptive methods in a heuristic manner without a mathematically defined optimization objective~\cite{chen2018adacomp,shi2019layer,xu2019elfish}. The focus of these works is to use different sparsity degrees in different neural network layers, which is orthogonal and complementary to our work in this paper. To the best of our knowledge, the automatic adaptation of sparsity (measured by $k$) \emph{with the objective of minimizing training time} has not been studied.

The optimal $k$ can depend on the communication bandwidth, computation power, model characteristics, and data distribution at clients. It is very difficult (if not impossible) to obtain a mathematical expression capturing the training convergence time with all these aspects, since even for simpler scenarios either not involving GS or not involving non-i.i.d. data, only upper bounds of the convergence have been obtained in the literature~\cite{WangJSAC2019,gtopk-Convergence,ALinear-Speedup-Analysis-of-Distributed-Deep-Learning,CooperativeSGD}.
In this paper, we use online learning to learn the near-optimal $k$ over time, which only requires mild assumptions (instead of an exact expression) of the convergence time. To our knowledge, we are the first to use online learning techniques to optimize the internal procedure of FL. Hence, our online learning formulation is new.

Furthermore, existing online learning algorithms either require the exact gradient/derivative of the cost function, which is difficult to obtain in practice~\cite{hazan2016introduction} or suffer from slow convergence if such information is not available (the bandit setting)~\cite{Online-convex-optimization-in-the-bandit,auer2002nonstochastic} (see further discussions in Section~\ref{sec:signEstimate}). It is challenging to develop an efficient online learning algorithm for determining $k$, which we address in this paper.

\textbf{Roadmap:}  Section~\ref{sec:fedLearn} describes FL using sparse gradients and our proposed FAB-top-$k$ GS approach. The online learning algorithm for finding the best $k$ and its theoretical analysis is presented in Section~\ref{sec:online-learning-to-determine-k}. The experimentation results are given in Section~\ref{sec:experimentation}. Section~\ref{sec:conclusion} draws conclusion.

\section{Federated Learning Using Sparse Gradients}
\label{sec:fedLearn}

\subsection{Preliminaries}

\label{subsec:preliminaries}

The goal of machine learning (model training) is to find the \emph{weights} (parameters) of the model that minimize a \emph{loss function}. Let $\mathbf{w}$ denote the vector of weights. The loss function $L(\mathbf{w}) := \frac{\sum_{h=1}^C f_h(\mathbf{w})}{C}$ captures how well the model with weights $\mathbf{w}$ fits the training data, where $f_h(\mathbf{w})$ is the loss for a data sample $h$, $L(\mathbf{w})$ is the overall loss, and $C$ is the number of data samples. The minimization of $L(\mathbf{w})$ is often achieved using stochastic gradient descent (SGD)~\cite{Goodfellow-et-al-2016}, where $\mathbf{w}$ is updated based on the \emph{estimated} gradient of $L(\mathbf{w})$ (denoted by $\nabla L(\mathbf{w})$) computed on a \emph{minibatch} of training data. 

In FL with $N$ different \emph{clients}, each client $i \in \{1,2,...,N\}$ has its own loss function $L(\mathbf{w},i):= \frac{\sum_{h=1}^{C_i} f_{i,h}(\mathbf{w})}{C_i}$, and the overall (global) loss function is $L(\mathbf{w}) := \frac{\sum_{i=1}^N C_i L(\mathbf{w},i)}{C}$, where $C_i$ denotes the amount of data samples available at client~$i$ and $C := \sum_{i=1}^N C_i$~\cite{WangJSAC2019}. The global loss function $L(\mathbf{w})$ is not directly observable by the system because the training data remains local at each client.

FL enables distributed model training without sharing the training data. The conventional FedAvg~\cite{mcmahan2016communication} approach includes performing a certain number of gradient descent steps at each client locally, followed by an aggregation of local model weights provided by all the clients through a central \emph{server}~\cite{mcmahan2016communication}. This procedure of multiple local update steps followed by global aggregation repeats until training convergence.

In this paper, we consider a slightly different procedure where instead of aggregating the model weights, we aggregate the sparsified gradients after every local update step. We will see in the experiments in Section~\ref{subsec:performance-fix-k} that with the same amount of communication overhead, our sparse gradient aggregation approach performs better than FedAvg.

Formally, in every \emph{training round} $m$, the model weight vector is updated according to
\begin{equation}\label{eq:gradientDescent}
    \mathbf{w}(m) = \mathbf{w}(m-1)-\eta \nabla_s L(\mathbf{w}(m-1))
\end{equation}
for $m=1,2,3,...$, where $\eta > 0$ is the SGD step size, $\mathbf{w}(m)$ is the weight vector obtained at the end of the current round $m$, $L(\mathbf{w}(m-1))$ is the loss obtained at the end of the previous round $m-1$ ($m=0$ corresponds to model initialization), and $\nabla_s L(\mathbf{w}(m-1)) \in \mathbb{R}^{D}$ is the sparse gradient of the global loss in round $m-1$ with $D$ defined as the dimension of the weight vector. For ease of presentation, we say that $\nabla_s L(\mathbf{w}(m-1))$ is computed in round $m$, and write $L(\mathbf{w}(m))$ as $L_m$.
Note that different from FedAvg, our $\mathbf{w}(m)$ at all clients are always synchronized, because all clients update their weights in (\ref{eq:gradientDescent}) using the same $\nabla_s L(\mathbf{w}(m-1))$. 
The computation of $\nabla_s L(\mathbf{w}(m-1))$ involves communication between clients and the server which is explained in Section~\ref{subsec:fairness-aware-top-k}.

\emph{Remark:} Note that both FedAvg~\cite{mcmahan2016communication} and our GS-based FL method (as described above) use synchronous SGD, which is beneficial over asynchronous SGD in FL settings with non-i.i.d. data distribution as discussed in~\cite{WangJSAC2019}.

\subsection{Fairness-Aware Bidirectional Top-$k$ GS} \label{subsec:fairness-aware-top-k}

The main goal of GS is to exchange only a small number of important elements in the gradient vector of each client, based on which the server computes a sparse global gradient that is sent to each client. 
In the following, we present a fairness-aware bidirectional top-$k$ GS (FAB-top-$k$) approach, where ``bidirectional top-$k$'' here indicates that \emph{both the uplink} (client to server) \emph{and downlink} (server to client) communications transmit only $k$ elements of the gradient vector. Compared to the unidirectional top-$k$ GS approach where the downlink may transmit as many as $kN$ (instead of $k$) elements~\cite{Deep-Gradient-Compression}, we save the downlink communication overhead by up to a factor of $N$, which is significant since $N$ can be large in FL. Compared to other approaches where the downlink transmits $k$ elements, such as~\cite{gtopk-Sparsification,Sattler2019}, our approach ensures fairness among clients in the sense that each client contributes at least $\lfloor k/N \rfloor$ elements to the sparse global gradient, which is useful for FL since the data at clients can be non-i.i.d. and biased. We use $\lfloor \cdot \rfloor$ and $\lceil \cdot \rceil$ denote the floor (rounding down to integer) and ceiling (rounding up to integer), respectively.

In FAB-top-$k$, similar to other GS approaches~\cite{Deep-Gradient-Compression,gtopk-Sparsification}, each client $i$ keeps an \emph{accumulated local gradient} denoted by $\mathbf{a}_i$. At initialization, each client $i$ sets $\mathbf{a}_i = \mathbf{0}$, where $\mathbf{0}$ is the zero vector. Then, for every round $m=1,2,3,...$, each client $i$ computes the full gradient $\nabla L(\mathbf{w}(m-1),i)$ locally and adds it to $\mathbf{a}_i$. Afterwards, it identifies the indices $\mathcal{J}_i$ of the top-$k$ absolute values of $\mathbf{a}_i$, and transmits these $k$ index-value pairs $\mathcal{A}_i := \{(j, a_{ij}): j\in \mathcal{J}_i\}$ to the server, where we use $a_{ij}$ to denote the $j$-th element of $\mathbf{a}_i$. After receiving $\mathcal{A}_i$ from every client $i$, the server identifies $k$ gradient elements that is aggregated and sent to the clients. 
The uniqueness of FAB-top-$k$ is the way the downlink $k$ elements are selected.

\subsubsection*{Fairness-Aware Gradient Element Selection}
Consider some $\kappa \leq k$, the server identifies the top-$\kappa$ elements from $\mathcal{A}_i$ received from client $i$, let $\mathcal{J}_i^\kappa$ denote the indices of these elements. Then, the server computes the union $\cup_i \mathcal{J}_i^\kappa$. 
Using a binary search procedure, we can find a value of $\kappa$ such that $\left| \cup_i \mathcal{J}_i^\kappa \right| \leq k$ and $\left| \cup_i \mathcal{J}_i^{\kappa+1} \right| > k$, where $|\cdot |$ here denotes the cardinality of the set. The indices in $\cup_i \mathcal{J}_i^\kappa$ are those gradient elements that will be aggregated and transmitted to clients in the downlink. If $\left| \cup_i \mathcal{J}_i^\kappa \right| < k$, we select $k - \left| \cup_i \mathcal{J}_i^\kappa \right|$ additional elements with the largest absolute values in $\left(\cup_i \mathcal{J}_i^{\kappa+1}\right) \setminus \left(\cup_i \mathcal{J}_i^\kappa\right) $
so that in total, $k$ elements are transmitted to clients. Let $\mathcal{J}$ denote the set of selected $k$ elements to be transmitted in the downlink. The server computes the aggregated gradient value $b_j := \frac{1}{C} \sum_i C_i a_{ij} \Identity[j\in\mathcal{J}_i]$ for each $j \in \mathcal{J}$, where $\Identity[\cdot]$ denotes the identity function that is equal to one if the condition is satisfied and zero otherwise. Then, the server sends $\mathcal{B} := \{(j,b_j):j\in\mathcal{J}\}$ to each client.

After the client receives $\mathcal{B}$ (and $\mathcal{J}$), each element indexed by $j$ in the sparse gradient is defined as $(\nabla_s L(\mathbf{w}(m-1)))_j := b_j \Identity[j\in\mathcal{J}]$, which is used to update the model weights using (\ref{eq:gradientDescent}). Then, each client $i$ resets $a_{ij}=0$ if $j\in\mathcal{J} \cap \mathcal{J}_i$.

It is easy to see that the above procedure provides a \emph{fairness guarantee} in the sense that each client contributes at least $\lfloor k/N \rfloor$ elements to the sparse gradient, because we always have $\left| \cup_i \mathcal{J}_i^\kappa \right| \leq k$ when $\kappa = \lfloor k/N \rfloor$.

The overall process is shown in Algorithm \ref{alg:fairnessTopK}. Note that Lines~\ref{algline:fairnessTopK:sparseGrad}--\ref{algline:fairnessTopK:gradientDescent} give the same results for all clients as they receive the same $\mathcal{B}$ from the server. Hence, $\mathbf{w}(m)$ remains synchronized among clients. We compute $\mathbf{w}(m)$ at clients instead of at the server so that we only need to exchange the sparse gradient. The sorting to obtain $\mathcal{J}_i$ at each client $i$ takes $O(D\log D)$ time. The computation of the set union $\cup_i \mathcal{J}_i^\kappa$ and the binary search of $\kappa$ to obtain $\mathcal{J}$ at the server takes $O(ND\log D)$ time, when sorted indices and values of $\mathcal{J}_i$ are computed once in every round $m$ and stored beforehand.

\emph{Remark:} Intuitively, FAB-top-$k$ converges due to the use of accumulated local gradient $\mathbf{a}_i$, which ensures that those gradient elements which are not included in the sparse gradient keep getting accumulated locally, so that they will be included in the sparse gradient if their accumulated values get large enough. Our experimentation results in Section~\ref{sec:experimentation} also confirm the convergence of FAB-top-$k$. A theoretical convergence analysis of FAB-top-$k$ is left for future work, while we anticipate that a similar analytical technique as in~\cite{gtopk-Convergence} can be used. 

We also note that the adaptive $k$ algorithm presented in the next section is \emph{not} limited to FAB-top-$k$ or the class of top-$k$ GS. It applies to any GS method with some sparsity degree. For simplicity, we refer to GS with $k$ elements in the sparse gradient vector as ``$k$-element GS'' in the following.

\begin{algorithm}[t]
\caption{FL with FAB-top-$k$} 
\label{alg:fairnessTopK} 

{\footnotesize

\KwIn{$k, \eta$}

\SetKwFor{EachClient}{each client $i = 1, ..., N$:}{}{}
\SetKwFor{TheServer}{the server:}{}{}

Initialize $\mathbf{w}(0)$ according to model specification and $\mathbf{a}_i \leftarrow \mathbf{0}$ ($\forall i$);

\For{$m = 1,... , M$}
{
    \EachClient{}{
        $\mathbf{a}_i \leftarrow \mathbf{a}_i + \nabla L(\mathbf{w}(m-1),i)$; \label{algline:fairnessTopK:gradientComputation}
        
        Compute $\mathcal{J}_i$;
        
        Send $\mathcal{A}_i := \{(j, a_{ij}): j\in \mathcal{J}_i\}$ to the server;  \label{algline:fairnessTopK:clientToServer}
    }
    
    \TheServer{}{
        Compute $\mathcal{J}$;
        
        \For{$j\in\mathcal{J}$}{
            $b_j \leftarrow \frac{1}{C} \sum_i C_i a_{ij}\Identity[j\in\mathcal{J}_i]$;
        }
    
        Send  $\mathcal{B} := \{(j,b_j):j\in\mathcal{J}\}$ to clients; \label{algline:fairnessTopK:serverToClient}
    }
    
    \EachClient{}{
        \For{$j=1,...,D$ \label{algline:fairnessTopK:sparseGrad}}{
            $\left(\nabla_s L(\mathbf{w}(m-1))\right)_j \leftarrow b_j \Identity[j\in\mathcal{J}]$;   
        }
        
        $\mathbf{w}(m) \leftarrow \mathbf{w}(m-1)-\eta \nabla_s L(\mathbf{w}(m-1))$;  \label{algline:fairnessTopK:gradientDescent}

        \For{$j\in\mathcal{J} \cap \mathcal{J}_i$}{
            $a_{ij}\leftarrow 0$;
        }
        
    }
}
}
\end{algorithm}

\section{Online Learning to Determine $k$}
\label{sec:online-learning-to-determine-k}

The choice of $k$ in $k$-element GS has a trade-off between communication-efficiency and learning-efficiency. A small $k$ requires a small amount of communication, but also causes the model to learn slowly because the direction of the sparse gradient can be very different from the direction of the full gradient in this case. Conversely, a large $k$ captures the gradient accurately, but also incurs a large communication overhead (time). It is therefore important to find the optimal $k$ to \emph{minimize the training convergence time} that is a sum of the computation and communication time in the FL process.

\subsection{Problem Formulation}

\subsubsection{Cost Definition}
We consider the training time (including computation and communication) of reaching a some desired value of the global loss function as the ``cost'' that we would like to minimize using an appropriately chosen value of $k$. The loss function value is related to the model accuracy, and we consider the loss instead of the accuracy because the loss is the direct objective used for model training, as explained in Section~\ref{subsec:preliminaries}. Our formulation and solution can be directly extended to other ``costs'' beyond the training time (such as energy consumption) as well, but we focus on the training time in this paper for simplicity to illustrate our ideas.

\begin{assumption}[Independent costs]
\label{assumption:independentCosts}
Consider any point during training where the model gives an arbitrary global loss of $L$. The progression of loss in subsequent training rounds (with $k$-element GS for some $k$) is independent of the value of $k'$ (for $k'$-element GS) used in the training rounds before reaching $L$. This also holds when multiple values of $k'$ and $k$ are used over time, before and after reaching $L$, respectively.
\end{assumption}

Assumption~\ref{assumption:independentCosts} says that the state of the model (captured by the weights) is reflected by the loss. We validate this with an experiment of FAB-top-$k$ with federated extended MNIST (\mbox{FEMNIST}) dataset~\cite{EMNIST} and $156$ clients (see Section~\ref{sec:experimentation} for further details). In Fig.~\ref{fig:AssumptionMotivationalExperiment}, we first perform FL with different values of $k$ before the global loss reaches a pre-defined target value $\psi$. Afterwards, we use $k=1000$. We see that regardless of the initial $k$, the losses after reaching $\psi$ (where we start to use $k=1000$ in all curves) remain almost the same, thus validating Assumption~\ref{assumption:independentCosts} empirically.
This assumption allows us to define the training time required to reach a loss $L$, when starting from loss $L'$, using $k$-element GS for some given $k$.

\begin{figure}[t]
\centering
\begin{subfigure}[b]{0.3\columnwidth}
 \centering
 \includegraphics[width=\textwidth]{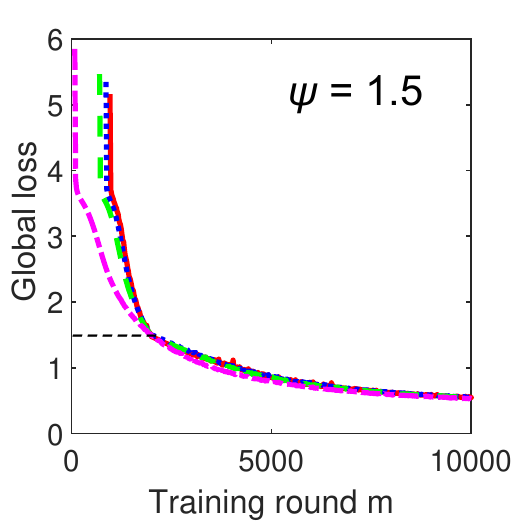}
\end{subfigure}
\begin{subfigure}[b]{0.3\columnwidth}
 \centering
 \includegraphics[width=\textwidth]{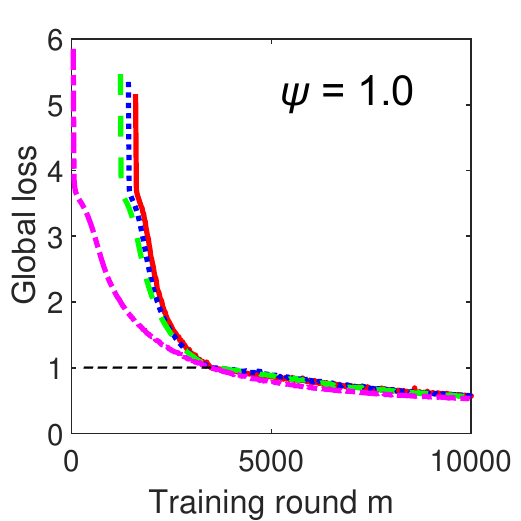}
\end{subfigure}
\begin{subfigure}[b]{0.33\columnwidth}
\centering
\includegraphics[width=1\textwidth]{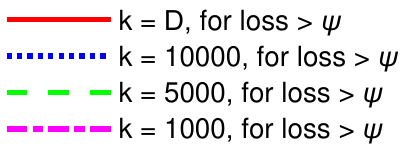}
\vspace{0.4in}
\end{subfigure}
\caption{Empirical validation of Assumption~\ref{assumption:independentCosts}. Recall that $D$ is the dimension of weight vector. For different $k$, training may start at different training rounds, so that all instances reach the target global loss $\psi$ at the same training round.}
\label{fig:AssumptionMotivationalExperiment}
\end{figure}

\begin{definition}[Training time for given loss interval]
\label{def:learningTime}
Define $\tilde{t}(k,l) \geq 0$ for any $k\in \{1,2,...,D\}$ and $l\in [L^*, L_0]$, such that for a training round with $k$-element GS starting with loss $L'$ and ending with loss $L < L'$, the total time (including computation and communication) of this round is equal to
\begin{equation}
\tilde{\tau}(L',L,k) := \int_{L}^{L'} \tilde{t}(k,l) dl
\label{eq:learningTimeDiscreteK}
\end{equation}
where $L^*$ denotes the optimal (minimum) global loss, and $L_0$ is the global loss at model initialization.
\end{definition}

Fig.~\ref{fig:costDef} gives an illustration of Definition~\ref{def:learningTime}. We express the training time in this integral form to facilitate the training time comparison later when using different values of $k$ over time. Note that different $k$ usually yields different sets of loss values obtained at the end of training rounds, but the total training time can be always expressed as the integral from the final loss to the initial loss according to Definition~\ref{def:learningTime}, by using the corresponding value of $k$ in $\tilde{t}(k,l)$ for each loss interval included in the integral. Next, we show that under mild assumptions, a definition of $\tilde{t}(k,l)$ always exists.

\begin{proposition}[Existence of $\tilde{t}(k,l)$]
\label{prop:tExistence}
For any $k\in\{1,2,...,D\}$, there always exists a function $\tilde{t}(k,l)$ that satisfies Definition~\ref{def:learningTime} when both of the following conditions hold:
\begin{enumerate}
    \item The sum of computation and communication time of one training round remains unchanged for any given $k$ (however, the time can be different for different $k$).
    \item When a training round (with some given $k$) starts at loss $L'$, the loss $L$ at the end of this round is a differentiable monotonically increasing function of $L'$ (i.e., function $L(L')$ decreases when $L'$ decreases) for any $k$.
\end{enumerate}
\end{proposition}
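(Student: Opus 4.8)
The plan is to fix $k$ and reduce the statement to constructing the antiderivative of $\tilde{t}(k,\cdot)$, which must solve a functional (conjugacy) equation. By Condition~1 the total time of one round with $k$-element GS is a fixed constant, call it $T_k > 0$. By Condition~2 the map $g_k$ that sends the loss $L'$ at the start of a round to the loss $g_k(L')$ at its end is differentiable and strictly increasing on the relevant loss interval, with $g_k(L') < L'$. In this notation, Definition~\ref{def:learningTime} asks precisely for a function $\tilde{t}(k,\cdot)\ge 0$ on $[L^*,L_0]$ such that $\int_{g_k(L')}^{L'}\tilde{t}(k,l)\,dl = T_k$ for every admissible start loss $L'$. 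Introducing an antiderivative $F_k$ with $F_k' = \tilde{t}(k,\cdot)$, this is equivalent to requiring $F_k(L') - F_k(g_k(L')) = T_k$ for all $L'$, together with $F_k$ nondecreasing (so that $\tilde{t}(k,\cdot)\ge 0$). Hence it suffices to exhibit such an $F_k$, i.e.\ a function conjugating the decreasing map $g_k$ to the translation $t\mapsto t-T_k$.

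To build $F_k$ I would use the standard fundamental-domain construction. Consider the forward orbit $L_0 > g_k(L_0) > g_k^{(2)}(L_0) > \cdots$, which decreases to some limit $L_\infty \ge L^*$; under the natural reading that training can approach the optimum, $L_\infty = L^*$. On the fundamental interval $[g_k(L_0), L_0]$, let $F_k$ be any continuous, strictly increasing function with $F_k(L_0) - F_k(g_k(L_0)) = T_k$ (such a function clearly exists since the interval has positive length and $T_k>0$; one can additionally match derivatives at the endpoints via $F_k'(g_k(L_0)) = F_k'(L_0)\,g_k'(L_0)$ to make the eventual $F_k$ globally $C^1$, but this is optional). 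Then extend $F_k$ to all of $(L^*,L_0]$ by the rule forced by the equation: for $L'$ below the fundamental interval set $F_k(L') := F_k(g_k^{-1}(L')) - T_k$ and iterate inward; this is well defined because $g_k$ is invertible on its range and each step lands in a previously defined piece, and monotonicity propagates since $\frac{d}{dL'}F_k(g_k^{-1}(L')) = F_k'(g_k^{-1}(L'))\,(g_k^{-1})'(L')\ge 0$. Setting $\tilde{t}(k,l) := F_k'(l)$ then yields a nonnegative, locally integrable function with $\int_{g_k(L')}^{L'}\tilde{t}(k,l)\,dl = F_k(L') - F_k(g_k(L')) = T_k = \tilde{\tau}(L', g_k(L'), k)$, which is exactly Definition~\ref{def:learningTime}.

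I expect the bookkeeping at the two ends of the loss interval, rather than the conjugacy itself, to be the only delicate point. Near $L^*$ the equation forces $F_k(g_k^{(n)}(L_0)) = F_k(L_0) - nT_k \to -\infty$, so $\tilde{t}(k,\cdot)$ is unbounded and merely locally integrable on $(L^*,L_0]$; this is expected, since reaching the exact optimum requires infinitely many rounds of positive time, and the single value $\tilde{t}(k,L^*)$ may be fixed arbitrarily without affecting any integral. If instead $L_\infty > L^*$, the losses in $[L^*, L_\infty)$ are never visited during training and $\tilde{t}$ there may be set to anything (e.g.\ $0$). One should also read Condition~2 as \emph{strict} monotonicity so that $g_k^{-1}$ is a genuine function; if $g_k$ is only weakly increasing, the same argument runs on the maximal subintervals where it is strictly increasing. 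None of this affects the conclusion that a valid $\tilde{t}(k,l)$ exists, which together with Condition~1 makes $\tilde{\tau}(L',L,k)$ in Definition~\ref{def:learningTime} well defined and consistent with the path-independence already posited in Assumption~\ref{assumption:independentCosts}.
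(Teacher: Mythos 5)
Your proposal is correct and takes essentially the same route as the paper: choose a fundamental interval $[g_k(L_0),L_0]$, define the function freely there so that it integrates to the constant round time, and then extend recursively inward using the invertibility of the round map, with losses never visited (below the orbit limit) left arbitrary. The only cosmetic difference is that you phrase the constraint as an Abel-type equation $F_k(L')-F_k(g_k(L'))=T_k$ on the antiderivative, whereas the paper differentiates Definition~\ref{def:learningTime} with respect to $L'$ to obtain the equivalent pointwise recurrence $\tilde{t}(k,L)=\tilde{t}(k,L')\cdot\frac{dL'}{dL}$ and extends $\tilde{t}$ itself recursively.
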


The proofs of the above proposition and subsequent theorems later in this paper are given in the appendix.

\begin{figure}
\centering
\includegraphics[width=0.9\columnwidth]{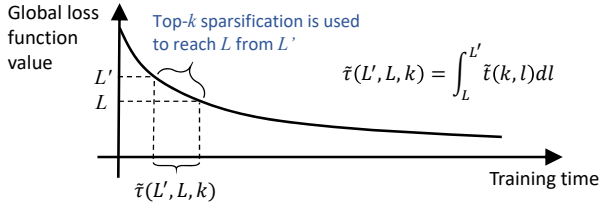}
\caption{Definition of training time for a loss interval.}
\label{fig:costDef}
\end{figure}

\subsubsection{Extension to Continuous $k$}
To facilitate the analysis and algorithm development later, we extend the definition of $\tilde{t}(k,l)$ to continuous $k$ as follows.

\begin{definition}[Randomized $k$-element GS] \label{def:randomizedTopK}
When $k$ is continuous in $[1,D]$, the system uses $\lfloor k \rfloor$-element GS with probability $\lceil k \rceil - k$, and $\lceil k \rceil$-element GS with probability $k - \lfloor k \rfloor$.
\end{definition}

This approach of rounding $k$ is known as stochastic rounding~\cite{gupta2015deep}. When $k$ is an integer, randomized $k$-element GS is equivalent to standard (non-randomized) $k$-element GS.
We focus on randomized $k$-element GS with continuous $k$ in the rest of this paper.

\begin{definition}[Expected training time for continuous $k$] \label{def:learningTimeContK}
Define
$t(k,l) := \left(\lceil k \rceil - k\right)\cdot \tilde{t}\left( \lfloor k \rfloor ,l \right)  + \left(k - \lfloor k \rfloor \right)\cdot \tilde{t}\left( \lceil k \rceil ,l \right)$
as the \emph{expected training time} for unit loss decrease. For a training round with \emph{randomized} $k$-element GS starting with loss $L'$ and ending with loss $L < L'$, the \emph{expected} total time (including computation and communication) of this round is
\begin{equation}
\tau(L',L,k):=\int_{L}^{L'} t(k,l) dl.
\label{eq:learningTimeContK}
\end{equation}
\end{definition}

\begin{assumption}[Properties of $t(k,l)$]
\label{assumption:convexity}
We assume that the following hold for $t(k,l)$:
\begin{enumerate}[a)]
\item (Convexity) The function $t(k,l)$ is convex in $k\in[1,D]$ for any given $l$. Consequently, $\tau(L',L,k)$ is also convex in $k$ for any given $L'$ and $L$ with $L' > L$.
\item (Bounded partial derivative) There exists some $g>0$, such that $\left| \frac{\partial t(k,l)}{\partial k}\right| \leq g$.
\item (Identical $k$ achieves minimum for all $l$) For any $l\neq l'$, we have $\arg\min_{k\in[1,D]} t(k,l) = \arg\min_{k\in[1,D]} t(k,l')$.
\end{enumerate}
\end{assumption}

Assumption~\ref{assumption:convexity} is only for the ease of presentation and regret analysis (see Definition~\ref{def:regret} below).
Although the value of $k$ yielding the minimum $t(k,l)$ ($\forall l$) is assumed to be the same in Item \emph{c)} of Assumption~\ref{assumption:convexity}, the value of $t(k,l)$ can be different for different $k$ and $l$. Since we do not make any statistical assumption on $t(k,l)$, our formulation belongs to the class of non-stochastic (adversarial) online learning~\cite{mannor2011bandits} with additional conditions given in Assumption~\ref{assumption:convexity}, which is more general (and usually more difficult) than stochastic online learning~\cite{Caron2012}.
From an empirical (practical) point of view, our algorithms presented later work even without Assumption~\ref{assumption:convexity}.

\subsubsection{Online Learning Formulation}
\label{subsec:onlineLearningFormulation}

Our goal is to find the optimal $k^*$ that minimizes the total training time of reaching some target loss value $L_M$, i.e., $k^* := \arg\min_k \int_{L_M}^{L_0} t(k,l) dl$.
However, the expression of $t(k,l)$ is unknown. The above definitions allow us to formulate the problem in the online learning setting where information related to $t(\cdot,l)$ gets revealed for different $l$ over time.
Sequential decisions of the choice of $k$ is made in every training round, and the effect of each choice is revealed \emph{after} the choice is made. 

Consider a sequence of choices $\{k_m:m=1,2,...,M\}$. In each round $m$, randomized $k_m$-element GS is used, where the training starts at loss $L'_m = L_{m-1}$, and at the end of the round, a new loss $L_m$ is obtained. The decision of $k_m$ is made based on the knowledge related to $t(k,l)$ for $l\in[L_{m-1}, L_0]$ and $k\in [1,D]$,  which has been revealed to the system before the beginning of the $m$-th round, while there is no knowledge about $t(k,l)$ for $l<L_{m-1}$. For simplicity, we denote $\tau_m(k) := \tau(L_{m-1},L_m,k)$ for short. 

It is important to note that in the definition of $\tau_m(k)$, the loss interval $[L_m,L_{m-1}]$ obtained in the $m$-th round when using $k_m$-element GS \emph{remains unchanged for a given $m$} regardless of the value of $k$ in $\tau_m(k)$. When $k\neq k_m$, $\tau_m(k)$ may correspond to the time that is not exactly one training round (and can possibly be a fractional number of training rounds), because we still focus on the same loss interval $[L_m,L_{m-1}]$ and the loss obtained exactly at the end of one training round if we had used $k$-element (instead of $k_m$-element) GS may be different from~$L_m$.

\begin{definition}[Regret]
\label{def:regret}
The regret~\cite{hazan2016introduction} of choosing $\{k_m\}$ compared to choosing the best $k^*$ in hindsight (i.e., assuming complete knowledge of $t(k,l)$ beforehand) is defined as
\begin{align*}
R(M) & \! := \!\! \sum_{m=1}^M \!\! \tau_{m}(k_m)\! - \!\!\! \int_{L_M}^{L_0} \!\!\!\! t(k^*,l) dl =\!\!  \sum_{m=1}^M \!\! \tau_{m}(k_m) \!-\!\!\! \sum_{m=1}^M\!\! \tau_{m}(k^*)
\end{align*}
where we note that $\int_{L_M}^{L_0} t(k,l) dl  = \sum_{m=1}^M \tau_{m}(k)$.
\end{definition}

The regret defined above is in fact an expected value due to the stochastic rounding of $k$, but we refer to this as the regret to distinguish from the expected regret where the expectation is over noisy estimations of the derivative sign that we will discuss later.
Our goal is to design an online learning algorithm for choosing $\{k_m\}$ such that the regret $R(M)$ grows sublinearly with $M$, so that the average regret over $M$ goes to zero as $M$ is large (i.e., $\lim_{M \rightarrow \infty} \frac{R(M)}{M} = 0$).

\emph{Remark:} The definition of training time in the form of an integral (Definitions~\ref{def:learningTime} and \ref{def:learningTimeContK}) and Assumptions~\ref{assumption:independentCosts} and \ref{assumption:convexity} are needed for a meaningful definition of the regret. The integral definition allows us to compare the training times although the sequence of losses obtained in training rounds and the total number of rounds for reaching the loss $L_M$ can be different when using $\{k_m\}$ and $k^*$. The comparison is possible because when using $k^*$-element GS, although $\tau_m(k^*)$ may correspond to a fractional number of rounds (the fraction can be either larger or smaller than one) for each $m$, $\sum_{m=1}^M\tau_m(k^*)$ is still the total time for reaching the final loss $L_M$. Assumption~\ref{assumption:independentCosts} and Item \emph{c)} in Assumption~\ref{assumption:convexity} ensure that the optimal solution is a static $k^*$ which does not change over time.

\subsection{Online Learning Based on the Sign of Derivative}

A standard approach of online learning in a continuous decision space is online gradient descent~\cite{hazan2016introduction}, which, however, is difficult to apply in our setting because it is hard to obtain an unbiased estimation of the gradient (equivalent to derivative in our case because our decision space for $k$ has a single dimension). We propose a novel online learning approach that only requires knowledge of the \emph{sign} of derivative instead of the actual derivative value.

\subsubsection{Online Learning Procedure for Determining $k_m$}

Define a continuous \emph{search interval} $\mathcal{K} := [k_{\textrm{min}}, k_{\textrm{max}}]$ to represent the possible interval for the optimal $k$ (i.e., $k^* \in \mathcal{K}$), where $k_{\textrm{min}}$ is usually a small integer larger than one to prevent ill-conditions in the gradient update when $k$ is too small, $k_{\textrm{max}}$ can be either the dimension of the weight vector (i.e., $D$) or a smaller quantity if we are certain that $k^*$ is within a smaller range (see Section~\ref{subsec:varyingSearchIntervals}). Let $B := k_{\textrm{max}} - k_{\textrm{min}}$.
Let $\mathcal{P}_\mathcal{K}(k)$ denote the projection of $k$ onto the interval $\mathcal{K}$, i.e., $\mathcal{P}_\mathcal{K}(k) := \arg\min_{k' \in \mathcal{K}} | k' - k |$.
We define the sign function as $\textrm{sign}(x) := \Identity[x>0] - \Identity[x<0]$. Note that with this definition, $\textrm{sign}(x) = 0$ if $x=0$.
Let $\tau'_m(k_m) := \left. \int_{L_m}^{L_{m-1}} \frac{\partial t(k,l)}{\partial k} dl \right|_{k=k_m}$ denote the derivative of $\tau_m(k)$ with respect to $k$ evaluated at $k=k_m$, and $s_m := \textrm{sign}( \tau'_m(k_m))$ denote the sign of the derivative. 
We also define $\delta_m := \frac{B}{\sqrt{2m}}$ as the step size for updating $k$ in round $m>0$ and define $\frac{1}{\delta_0} := 0$ for convenience. 

We propose an online learning procedure (given in Algorithm~\ref{alg:onlineLearning}) where the new value $k_{m+1}$ in the $(m+1)$-th step is determined from the derivative sign in the $m$-th step, by updating $k$ to the opposite direction of the derivative sign in Line~\ref{algline:onlineLearning:update} of Algorithm~\ref{alg:onlineLearning}.

\begin{algorithm}[t]
\caption{Online learning to determine $k$} 
\label{alg:onlineLearning} 

{\footnotesize

\KwIn{$k_\textrm{min}$, $k_\textrm{max}$, $B$, initial $k_1$}

\KwOut{$\{k_m\}$ in a sequential manner}

Set $\mathcal{K} \leftarrow [k_{\textrm{min}}, k_{\textrm{max}}]$;

\For{$m=1,2,...,M-1$}
{
Obtain $s_m$ (the sign of $\tau'_m(k_m)$) from the system;

Update $k_{m+1} \leftarrow \mathcal{P}_\mathcal{K}(k_m - \delta_m s_m)$, where $\delta_m := \frac{B}{\sqrt{2m}}$;  \label{algline:onlineLearning:update} 
}

}
\end{algorithm}

It is worth noting that in each step $m$, we only require that the sign of $\tau'_m(k_m)$ (i.e., $s_m$) is known to the system. The function $\tau_m(\cdot)$ itself or the loss values $L_{m-1}$ and $L_m$ are not known. We will show later in Section~\ref{sec:signEstimate} that only an estimated value of $s_m$ is necessary to obtain a similar regret bound (up to a constant factor). This makes it extremely easy to apply the algorithm in practice.

\subsubsection{Regret Analysis}
We first analyze the regret when the exact $s_m$ is obtained in each round $m$.
To facilitate the analysis, we assume that $t(k,l)$ for all $k\in [1, D]$ and $l \in [L^*, L_0]$ is given (but unknown) before the start of the system. This ensures that $\tau_m(k)$ does not change depending on the value of $k_m$ chosen in previous rounds. We also assume that the the difference between $L_{m-1}$ and $L_m$ is bounded by some finite value\footnote{Such a finite value always exists because the initial loss at model initialization $L_0$ is finite.} (although $\{L_m\}$ is not known to the system), hence according to Item \emph{b)} in Assumption~\ref{assumption:convexity}, we have \vspace{-2em}

{\small
\begin{align}
\left| \tau'_m(k) \right| & = \left| \int_{L_m}^{L_{m-1}}  \frac{\partial t(k,l)}{\partial k} dl \right| \leq \int_{L_m}^{L_{m-1}}\left|  \frac{\partial t(k,l)}{\partial k} \right| dl \nonumber \\ 
& \leq g (L_{m-1} - L_m) \leq G
\label{eq:GDef}
\end{align}
}%
where we define $G$ as the upper bound in the last inequality for any $m$, and the first equality is from the definition in (\ref{eq:learningTimeContK}).

\begin{theorem}
\label{theorem:RegretBound}
Algorithm~\ref{alg:onlineLearning} gives the following regret bound:
\begin{equation}
R(M) \leq G B \sqrt{2M}.
\end{equation}
\end{theorem}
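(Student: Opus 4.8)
The plan is to read Algorithm~\ref{alg:onlineLearning} as one-dimensional projected online subgradient descent with time-varying step sizes, in which the true derivative $\tau_m'(k_m)$ is replaced by its sign $s_m$; the bound $GB\sqrt{2M}$ should then fall out of the usual potential-function argument, once one has pinned down why the sign is an adequate substitute. The enabling structural fact is that, by Item~c) of Assumption~\ref{assumption:convexity}, all $t(\cdot,l)$ share a single minimizer, hence so do all $\tau_m(k)=\int_{L_m}^{L_{m-1}} t(k,l)\,dl$; together with $k^*\in\mathcal{K}$ this means $k^*$ minimizes every $\tau_m$, and since each $\tau_m$ is convex we get $\tau_m'(k)\ge 0$ for $k\ge k^*$ and $\tau_m'(k)\le 0$ for $k\le k^*$. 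Equivalently, $s_m(k_m-k^*)\ge 0$ for every round $m$: the estimated sign never points away from the comparator $k^*$.

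I would then carry out two steps. \emph{(i) Per-round regret.} Convexity of $\tau_m$ gives $\tau_m(k_m)-\tau_m(k^*)\le \tau_m'(k_m)(k_m-k^*)=|\tau_m'(k_m)|\,s_m(k_m-k^*)$; invoking $|\tau_m'(k_m)|\le G$ from~(\ref{eq:GDef}) together with $s_m(k_m-k^*)\ge 0$ yields $\tau_m(k_m)-\tau_m(k^*)\le G\,s_m(k_m-k^*)$, so $R(M)\le G\sum_{m=1}^M s_m(k_m-k^*)$. \emph{(ii) Potential argument.} Because $k^*\in\mathcal{K}$ and $\mathcal{P}_\mathcal{K}$ is non-expansive, $(k_{m+1}-k^*)^2\le (k_m-\delta_m s_m-k^*)^2=(k_m-k^*)^2-2\delta_m s_m(k_m-k^*)+\delta_m^2 s_m^2$; since $s_m\in\{-1,0,1\}$ we have $s_m^2\le 1$, and rearranging gives $s_m(k_m-k^*)\le \frac{(k_m-k^*)^2-(k_{m+1}-k^*)^2}{2\delta_m}+\frac{\delta_m}{2}$. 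Summing over $m=1,\dots,M$, I bound the first family of terms by Abel summation, using that $\frac{1}{2\delta_m}=\frac{\sqrt{2m}}{2B}$ is nondecreasing (with the paper's convention $\frac{1}{\delta_0}:=0$) and $(k_m-k^*)^2\le B^2$, which gives $\sum_{m=1}^M \frac{(k_m-k^*)^2-(k_{m+1}-k^*)^2}{2\delta_m}\le \frac{B^2}{2\delta_M}=\frac{B\sqrt{2M}}{2}$; and the second family by $\frac{1}{2}\sum_{m=1}^M\delta_m=\frac{B}{2\sqrt2}\sum_{m=1}^M m^{-1/2}\le \frac{B}{2\sqrt2}\cdot 2\sqrt M=\frac{B\sqrt{2M}}{2}$. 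Adding, $\sum_{m=1}^M s_m(k_m-k^*)\le B\sqrt{2M}$, hence $R(M)\le GB\sqrt{2M}$.

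The main obstacle is step~(i): for a generic sign-only update one cannot bound $\tau_m(k_m)-\tau_m(k^*)$ by $G\,s_m(k_m-k^*)$, since a step in the wrong direction would make the right-hand side negative while the left-hand side is a small positive quantity, after which the summation in step~(ii) breaks. The common-minimizer consequence of Assumption~\ref{assumption:convexity}c) is precisely what removes ``wrong-direction'' rounds and makes the Lipschitz bound $|\tau_m'(k_m)|\le G$ usable; everything after that is the standard varying-step-size bookkeeping (non-expansiveness of the projection, $s_m^2\le 1$, Abel summation, and $\sum_{m=1}^M m^{-1/2}\le 2\sqrt M$).
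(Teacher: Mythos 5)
Your proposal is correct and mirrors the paper's own argument step for step: the same key lemmas (that $s_m(k_m-k^*)\ge 0$ via the common minimizer from Assumption~\ref{assumption:convexity}c, the convexity bound $\tau_m(k_m)-\tau_m(k^*)\le G\,s_m(k_m-k^*)$, and the projection/potential inequality), followed by the identical Abel-summation bookkeeping with $\frac{1}{\delta_0}:=0$, $(k_m-k^*)^2\le B^2$, and $\sum_{m=1}^M m^{-1/2}\le 2\sqrt{M}$. No gaps; nothing further to add.
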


\subsection{Using Estimated Derivative Sign}
\label{sec:signEstimate}

We now consider the case where the exact $s_m$ is not available and only an estimate is available. Let the random variable $\hat{s}_m \in \{-1,0,1\}$ denote the estimated sign of derivative in round $m$, which is used in Algorithm~\ref{alg:onlineLearning} in place of $s_m$. 
Since $\hat{s}_m$ ($\forall m$) is random, $k_m$ which depends on $\hat{s}_{m'}$ (for $m'<m$) is also random. Hence, $s_m$ is also a random variable that depends on $k_1,...,k_m$.
We assume that for any $m$, we have 
\begin{equation}
\mathrm{sign}\left(\Expect[\hat{s}_m | k_1,..,k_m]\right) = s_m,
\label{eq:equalExpectedSign}
\end{equation}
i.e., the sign of the expectation of $\hat{s}_m$ is equal to the derivative sign $s_m$, where $\Expect[\cdot]$ denotes the expectation. 
We also assume that there exists a constant $H_m \geq 1$ for each $m$, such that 
\begin{equation}
H_m \Expect[\hat{s}_m | k_1,..,k_m] = s_m
\label{eq:sExpect}
\end{equation}
and define $H$ such that $H_m \leq H$ for all $m$.

When $s_m \in \{ -1, 1 \}$ (i.e., the actual sign of derivative is not zero), condition (\ref{eq:equalExpectedSign}) holds if the probability of estimating the correct sign is higher than the probability of estimating a wrong sign (because $\hat{s}_m \in \{-1,0,1\}$), which is straightforward for any meaningful estimator. The difference between the probabilities of estimating the correct and wrong signs is captured by $H$ in (\ref{eq:sExpect}), where a larger $H$ corresponds to a smaller difference in the probabilities (i.e., a worse estimator), because if $\left|\Expect[\hat{s}_m | k_1,..,k_m]\right| = \left|\Pr\{\hat{s}_m =1| k_1,..,k_m\} - \Pr\{\hat{s}_m =-1| k_1,..,k_m\}\right|$ is small, a large $H$ is required since $s_m \in \{ -1, 1 \}$. When there is no estimation error, we have $H=1$. For $s_m = 0$, condition (\ref{eq:equalExpectedSign}) requires that the probabilities of (incorrectly) estimating as $\hat{s}_m = -1$ and $\hat{s}_m = 1$ are equal. Note that $s_m = 0$ almost never occurs in practice though.

\begin{theorem}
\label{theorem:ExpectedRegretBound}
When using the estimated derivative sign $\hat{s}_m$, Algorithm~\ref{alg:onlineLearning} gives the following expected regret bound:
\begin{equation}
\Expect[R(M)] \leq  G H B \sqrt{2M}.
\end{equation}
\end{theorem}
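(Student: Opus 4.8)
The plan is to adapt the proof of Theorem~\ref{theorem:RegretBound} to the randomized setting, using a martingale-type argument to handle the dependence of $k_m$ on the past estimates $\hat{s}_1,\dots,\hat{s}_{m-1}$. First I would recall the deterministic analysis: by convexity of $\tau_m$ (Assumption~\ref{assumption:convexity}a) and the fact that $k^* \in \mathcal{K}$, one bounds $\tau_m(k_m) - \tau_m(k^*) \leq \tau'_m(k_m)(k_m - k^*) = s_m |\tau'_m(k_m)| (k_m - k^*) \leq G \, s_m (k_m - k^*)$, where we used $|\tau'_m(k_m)| \leq G$ from~(\ref{eq:GDef}) together with the sign identity. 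So it suffices to bound $\sum_m s_m (k_m - k^*)$, and the telescoping/potential argument on $(k_m - k^*)^2$ with the update $k_{m+1} = \mathcal{P}_\mathcal{K}(k_m - \delta_m s_m)$ (nonexpansiveness of projection) gives the $\sqrt{2M}$ rate.

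The key modification is that the actual update in Algorithm~\ref{alg:onlineLearning} now uses $\hat{s}_m$ rather than $s_m$, so the potential recursion reads $(k_{m+1} - k^*)^2 \leq (k_m - \hat{s}_m \delta_m - k^*)^2 = (k_m - k^*)^2 - 2\delta_m \hat{s}_m (k_m - k^*) + \delta_m^2 \hat{s}_m^2$. Rearranging and summing yields a bound on $\sum_m \hat{s}_m (k_m - k^*)$, not $\sum_m s_m(k_m - k^*)$. To bridge the gap I would take conditional expectations: conditioned on $k_1,\dots,k_m$ (equivalently on $\hat{s}_1,\dots,\hat{s}_{m-1}$), the quantity $k_m - k^*$ is deterministic, so $\Expect[\hat{s}_m(k_m - k^*) \mid k_1,\dots,k_m] = \Expect[\hat{s}_m \mid k_1,\dots,k_m](k_m - k^*) = \tfrac{1}{H_m} s_m (k_m - k^*)$ by~(\ref{eq:sExpect}). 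Since $H_m \geq 1$, and since $s_m(k_m - k^*)$ is exactly the nonnegative-ish regret term we want to control — more precisely, after the convexity step the per-round regret is at most $G s_m(k_m - k^*)$ — I would use $s_m(k_m - k^*) = H_m \Expect[\hat{s}_m \mid k_1,\dots,k_m](k_m - k^*) \leq H \, \Expect[\hat{s}_m \mid k_1,\dots,k_m](k_m - k^*)$; one must check the sign so the inequality goes the right way, but since the telescoped bound on $\Expect[\sum_m \hat{s}_m(k_m-k^*)]$ is $\leq \tfrac{1}{2}\sum_m(\delta_m - \delta_{m-1})\cdot(\text{diam})^2 + \tfrac12\sum_m \delta_m \leq B\sqrt{2M}$-type (using $\hat{s}_m^2 \leq 1$), we get $\Expect[\sum_m s_m(k_m - k^*)] \leq H \Expect[\sum_m \hat{s}_m(k_m-k^*)] \leq H B \sqrt{2M}$, and multiplying by $G$ gives the claim.

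Concretely the steps in order are: (i) use tower property to write $\Expect[R(M)] = \sum_m \Expect[\tau_m(k_m) - \tau_m(k^*)]$; (ii) apply convexity and~(\ref{eq:GDef}) inside the expectation to get $\Expect[R(M)] \leq G \sum_m \Expect[s_m(k_m-k^*)]$; (iii) condition on $k_1,\dots,k_m$ and invoke~(\ref{eq:sExpect}) plus $1 \le H_m \le H$ to replace $s_m(k_m - k^*)$ by $H\,\hat{s}_m(k_m-k^*)$ in expectation; (iv) run the standard online-(sub)gradient-descent potential argument on $\|k_{m+1} - k^*\|^2$ using projection nonexpansiveness, $|\hat{s}_m| \le 1$, the step sizes $\delta_m = B/\sqrt{2m}$, and $\text{diam}(\mathcal{K}) = B$; (v) sum the resulting series (the telescoping term and $\sum_m \delta_m/2 = \sum_m B/(2\sqrt{2m}) \le B\sqrt{M/2}$, etc.) to obtain $\Expect[\sum_m \hat{s}_m(k_m - k^*)] \le B\sqrt{2M}$, and combine. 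The main obstacle I anticipate is step (iii): getting the direction of the inequality right when passing from $s_m(k_m-k^*)$ to $H\Expect[\hat{s}_m\mid\cdot](k_m-k^*)$, since $H_m \ge 1$ only helps if the common sign of both expressions makes the substitution an upper bound — this needs the observation that after the convexity step we are upper-bounding by $G s_m(k_m-k^*)$ which equals $G H_m \Expect[\hat s_m\mid\cdot](k_m-k^*)$ exactly, so monotonicity in $H_m$ is applied to a nonnegative multiple and the bound is clean. The rest is a routine repetition of the computation already done for Theorem~\ref{theorem:RegretBound}.
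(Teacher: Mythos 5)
Your proposal is correct and follows essentially the same route as the paper's proof: the convexity bound $\tau_m(k_m)-\tau_m(k^*)\leq G\,s_m(k_m-k^*)$, substitution of $s_m = H_m\Expect[\hat{s}_m\mid k_1,\dots,k_m]$ with the sign check (via $s_m(k_m-k^*)\geq 0$) to justify $H_m\leq H$, and the projection/telescoping potential argument run with $\hat{s}_m$ in place of $s_m$ before taking total expectation. The sign subtlety you flag in step (iii) is exactly the point the paper resolves using Lemma~\ref{lemma:sProductLargerThanZero} together with condition (\ref{eq:equalExpectedSign}), so no changes are needed.
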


A specific way of estimating the sign of derivative in practice will be presented in Section~\ref{subsec:overallProcedure}.

\emph{Remark:} The regret bounds of using estimated and exact derivative signs only differ by a constant factor $H$. When considering $G$, $H$, and $B$ as constants, both approaches give a regret bound of $O(\sqrt{M})$, which is the same as the regret bound of online gradient descent with exact gradient~\cite{hazan2016introduction}. In addition, the time-averaged regret bound of our approach is $O\left( \frac{1}{\sqrt{M}} \right)$, which is the same as the convergence bound of gradient descent on an identical (unchanging) cost function~\cite{convex}. \emph{We can achieve the same asymptotic bound on changing cost functions using only the estimated sign of derivative}.

Compared to bandit settings that do not require any knowledge related to the gradient/derivative, our regret bound is asymptotically better than the continuous bandit case~\cite{Online-convex-optimization-in-the-bandit} and the same as the non-stochastic multi-armed bandit (MAB) case when restricting our decision space to integer values of $k$~\cite{auer2002nonstochastic}. However, the empirical performance of MAB algorithms applied to our problem is much worse than our proposed approach as we will see in Section~\ref{subsec:performance-online}, because MAB algorithms need to try each possible value of $k$ at least once to learn the effect of different $k$ that is used as a basis for selecting future $k$ values.

\subsection{Extension to Varying Search Intervals}
\label{subsec:varyingSearchIntervals}

The update step size $\delta_m$ and the regret bound $R(M)$ are proportional to the search range $B$. When the communication time is much larger than the computation time, a small value of $k$ is often beneficial. In this case, the update step $\delta_m$ in Algorithm~\ref{alg:onlineLearning} may be too large which causes high fluctuation of $k_m$, resulting in a large amount of time used for communication since $k_m$ can be large at times. To avoid this issue, we propose an extended online learning algorithm in Algorithm~\ref{alg:extendedOnlineLearning} where we reduce the search range (and hence the update step size) over time.

Algorithm~\ref{alg:extendedOnlineLearning} is equivalent to running multiple instances of Algorithm~\ref{alg:onlineLearning} with different search intervals $\mathcal{K}$ and corresponding $B$. When we are certain that the optimal $k$ is within a smaller interval, we may decide to use the smaller range (i.e., smaller $B$) and ``reset'' the counter $m$ in $\delta_m$ computation for evaluating subsequent values of $k_m$. To see why this can be beneficial, we consider two instances of Algorithm~\ref{alg:onlineLearning} with $B$ and $B'$ ($B' < B$), respectively. Assume both search intervals include $k^*$ but the smaller search interval is not known until running $M'$ rounds of the first instance. The total regret after $M'$ rounds of the first instance and $M''$ rounds of the second instance is upper bounded by $GH\sqrt{2}\left(B \sqrt{M'} + B' \sqrt{M''}\right)$, according to Theorem~\ref{theorem:ExpectedRegretBound}. Hence, after $M'$ rounds with $B$, if
\begin{equation}
B \sqrt{M'} + B' \sqrt{M''} < B \sqrt{M' + M''},
\label{eq:reduceB}
\end{equation}
then starting the second instance with $B'$ gives a lower overall regret bound.
By taking the square on both sides of (\ref{eq:reduceB}), cancelling $B^2 M'$, and dividing by $M''$, we can see that (\ref{eq:reduceB}) is equivalent to
$(B')^2 + 2 B B' \sqrt{\frac{M'}{M''}} < B^2$.
Hence, if (\ref{eq:reduceB}) holds for $M''=M'$, it also holds for any $M'' > M'$. For $M'' = M'$, (\ref{eq:reduceB}) becomes
$B' < B \left(\sqrt{2} - 1 \right)$.

In Algorithm~\ref{alg:extendedOnlineLearning}, we define an update window of $M_u$ rounds and consider the minimum/maximum values of $k_m$ obtained in this window divided/multiplied by a coefficient $\alpha$ to be the possible interval of $k^*$ (Lines~\ref{algline:extendedOnlineLearning:kMinMaxWindow}--\ref{algline:extendedOnlineLearning:kMinMaxAlpha}). After computing $B'$ for this new interval, Line~\ref{algline:extendedOnlineLearning:checkBUpdate} checks whether $B' < B \left(\sqrt{2} - 1 \right)$ is satisfied and whether the current instance has run for at least  the same number of rounds as the previous instance (i.e., $M'' \geq M'$). If both are true, it is beneficial to start a new instance according to the above discussion, and the algorithm starts a new instance by assigning the new interval in Line~\ref{algline:extendedOnlineLearning:assignNewInstance}. The variable $m_0$ in Algorithm~\ref{alg:extendedOnlineLearning} keeps track of when the new instance has started and acts equivalently to resetting the counter for $\delta_m$ computation in Line~\ref{algline:extendedOnlineLearning:update}.

\begin{algorithm}[t]
\caption{Extended online learning to determine $k$} 
\label{alg:extendedOnlineLearning} 

{\footnotesize

\KwIn{$k_{\textrm{min}}$, $k_{\textrm{max}}$, $B_0$, $\alpha \geq  1$, update window $M_u$, initial $k_1$}

\KwOut{$\{k_m\}$ in a sequential manner}

Initialize $m_0 \leftarrow 1$, $B \leftarrow B_0$, $n \leftarrow 0$, $M' \leftarrow 0$, $\mathcal{K} \leftarrow [k_{\textrm{min}}, k_{\textrm{max}}]$, $k'_\textrm{min} \leftarrow \infty$, and $k'_\textrm{max} \leftarrow 0$;

\For{$m=1,2,...,M-1$}
{
Obtain $\hat{s}_m$ (the estimated sign of $\tau'_m(k_m)$) from the system;

Update $k_{m+1} \leftarrow \mathcal{P}_\mathcal{K}(k_m - \delta_m \hat{s}_m)$, where $\delta_m := \frac{B}{\sqrt{2(m-m_0)}}$;  \label{algline:extendedOnlineLearning:update} 

$M'' \leftarrow m-m_0$; //Number of rounds running the current instance

$k'_\textrm{min} \leftarrow \min\left\{k'_\textrm{min}, k_{m+1}\right\}$, 
$k'_\textrm{max} \leftarrow \max\left\{k'_\textrm{max}, k_{m+1}\right\}$; \label{algline:extendedOnlineLearning:kMinMaxWindow} 

$n \leftarrow n+1$;  \label{algline:extendedOnlineLearning:addN} 

\If{$n \geq M_u$}{

$k'_\textrm{max} \leftarrow \min\left\{\alpha k'_\textrm{max}, k_{\textrm{max}}\right\}$, $k'_\textrm{min} \leftarrow \max\left\{ k'_\textrm{min} / \alpha, k_{\textrm{min}}\right\}$; \label{algline:extendedOnlineLearning:kMinMaxAlpha} 

$B' \leftarrow k'_\textrm{max} - k'_\textrm{min}$;

\If{$B' < \left(\sqrt{2} - 1\right)B$ {\normalfont\textbf{and}} $M'' \geq M'$\label{algline:extendedOnlineLearning:checkBUpdate} 
}{

$\mathcal{K} \leftarrow [k'_\textrm{min}, k'_\textrm{max}]$, $B \leftarrow B'$; \quad //Start new instance  \label{algline:extendedOnlineLearning:assignNewInstance} 

$M' \leftarrow M''$;  \quad //Current instance becomes previous

$m_0 \leftarrow m$;

}

$n \leftarrow 0$, $k'_\textrm{min} \leftarrow \infty$, $k'_\textrm{max} \leftarrow 0$;
}

}

}
\end{algorithm}

From the above discussion, we can see that if $M'' \geq M'$ at the last round $m=M-1$ in Algorithm~\ref{alg:extendedOnlineLearning}, the overall regret of Algorithm~\ref{alg:extendedOnlineLearning} for all $M$ rounds is upper bounded by the same bound given in Theorem~\ref{theorem:ExpectedRegretBound} (or Theorem~\ref{theorem:RegretBound} if exact derivative sign is used).
Depending on how the search interval shrinks over time, the actual regret of Algorithm~\ref{alg:extendedOnlineLearning} can be significantly better than that of Algorithm~\ref{alg:onlineLearning}.

\subsection{Implementation of Derivative Sign Estimation}
\label{subsec:overallProcedure}

To estimate the derivative sign, each client $i$ randomly selects one data sample $h$ from its minibatch in the current round $m$. The client computes three losses on this data sample: 1) the loss $f_{i,h}(\mathbf{w}(m-1))$ obtained at the end of the previous round $m-1$; 2) the loss $f_{i,h}(\mathbf{w}(m))$ obtained at the end of the current round $m$; 3) the loss $f_{i,h}(\mathbf{w}'(m))$, where $\mathbf{w}'(m)$ is the global weight vector obtained if instead of $k_m$-element GS, we use $k'_m$-element GS with $k'_m := k_m - \delta_m / 2$. We use the same data sample $h$ to compute these three losses so that they are comparable. The $k'_m$-element GS is used to evaluate whether it is beneficial to reduce the value of $k$. 

The losses $f_{i,h}(\mathbf{w}(m-1))$, $f_{i,h}(\mathbf{w}(m))$, and $f_{i,h}(\mathbf{w}'(m))$ are sent from each client to the server, and the server computes averages of the losses, denoted by $\tilde{L}(\mathbf{w}(m-1))$, $\tilde{L}(\mathbf{w}(m))$, and $\tilde{L}(\mathbf{w}'(m))$, respectively. 
Because the losses obtained using one round of $k_m$-element and $k'_m$-element GS are usually different (i.e., $\tilde{L}(\mathbf{w}(m)) \neq \tilde{L}(\mathbf{w}'(m))$), we need to map the time of one round when using $k'_m$-element GS to the time for reaching the loss $\tilde{L}(\mathbf{w}(m))$, as $\tau_m (k)$ is defined on the loss interval corresponding to $k_m$-element (instead of $k'_m$-element) GS (see Section~\ref{subsec:onlineLearningFormulation}). We estimate $\tau_m (k'_m)$ as \vspace{-0.11in}

{\small
\begin{equation}
\hat{\tau}_m\left(k'_m\right) := \theta_m\left(k'_m\right) \cdot \frac{\tilde{L}(\mathbf{w}(m-1)) - \tilde{L}(\mathbf{w}(m))}{\tilde{L}(\mathbf{w}(m-1)) - \tilde{L}(\mathbf{w}'(m))}
\label{eq:estTauExtension}
\end{equation}
}%
where $\theta_m\left(k'_m\right)$ is defined as the time of one training round when using $k'_m$-element GS. Note that $\tau_m (k'_m)$ (and $\hat{\tau}_m (k'_m)$) may correspond to the time for a fractional number of training rounds.
Then, the sign of derivative is estimated as \vspace{-0.11in}

{\small
\begin{equation}
\hat{s}_m = \textrm{sign}\left( \frac{\tau_m(k_m) - \hat{\tau}_m\left(k'_m\right)}{k_m - k'_m} \right)
\label{eq:signEstimation}
\end{equation}
}%
where the part inside $\textrm{sign}(\cdot )$ is the estimated derivative.

The above procedure is under the assumption that $\tilde{L}(\mathbf{w}(m-1)) > \tilde{L}(\mathbf{w}(m))$ and $\tilde{L}(\mathbf{w}(m-1)) > \tilde{L}(\mathbf{w}'(m))$, which holds for most of the time because a training iteration should decrease the loss. Occasionally, it may not hold due to randomness in minibatch sampling and choice of $h$ at each client. If it does not hold, (\ref{eq:estTauExtension}) has no physical meaning and we consider that $\hat{s}_m$ is unavailable and the value of $k_m$ remains unchanged in Algorithms~\ref{alg:onlineLearning} and \ref{alg:extendedOnlineLearning}. Lines~\ref{algline:extendedOnlineLearning:kMinMaxWindow} and \ref{algline:extendedOnlineLearning:addN} in Algorithm~\ref{alg:extendedOnlineLearning} are skipped when $k_m$ does not change in round $m$.

\begin{figure}
\centering
\includegraphics[width=1\columnwidth]{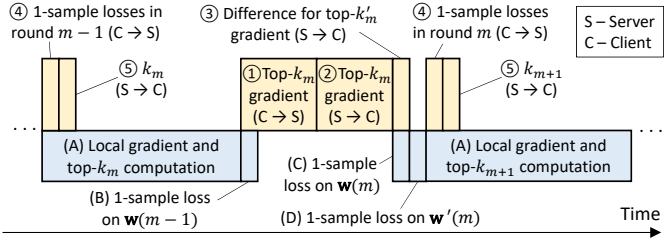}
\caption{Overall procedure, where  \circle{1}--\circle{5} show the communication between client and server, and (A)--(D) show the computation at each client. }
\label{fig:overallProcedure}
\end{figure}

The overall procedure is shown in Fig.~\ref{fig:overallProcedure}, where Step (A) corresponds to all the local computations at clients in Algorithm~\ref{alg:fairnessTopK}, Steps \circle{1} and \circle{2} correspond to Lines~\ref{algline:fairnessTopK:clientToServer} and \ref{algline:fairnessTopK:serverToClient} in Algorithm~\ref{alg:fairnessTopK}, respectively. Since the additional losses $\tilde{L}(\cdot)$ are computed only using one sample at each client, the additional computation time of each client (Steps (B), (C), (D)) is very small compared to the gradient computation on a minibatch in the training round (Step (A)).
Because $k'_m < k_m$, the $k'_m$-element GS result can be derived from $k_m$-element GS, hence only a small amount of information capturing the difference between $k_m$-element and $k'_m$-element GS results needs to be transmitted (Step \circle{3}) so that each client obtains $\mathbf{w}'(m)$ (in addition to $\mathbf{w}(m)$). The local losses $f_{i,h}(\cdot)$ obtained in round $m$ on the selected sample $h$ and the value of $k_{m+1}$ can be transmitted in parallel with the local gradient computation in the next round $m+1$ (Step \circle{4}), because the clients need to know the value of $k_{m+1}$ only after completing the local gradient computation (Line~\ref{algline:fairnessTopK:gradientComputation} in Algorithm~\ref{alg:fairnessTopK}) in round $m+1$. 
The server computes $k_{m+1}$ using $\hat{s}_m$ obtained from~(\ref{eq:signEstimation}) after receiving the losses from all clients in Step \circle{4}, and sends $k_{m+1}$ to clients in Step \circle{5}.
We ignore the server computation time in Fig.~\ref{fig:overallProcedure} because the server is usually much faster than clients and the time is negligible.

\section{Experimentation Results}
\label{sec:experimentation}

We evaluate our proposed methods with non-i.i.d. data distribution at clients using the \mbox{FEMNIST}~\cite{EMNIST} and CIFAR-10 datasets~\cite{CIFAR10}. FEMNIST includes $62$ classes of handwritten digits and letters. It is pre-partitioned according to the writer where each writer corresponds to a client in federated learning (hence non-i.i.d.). For FEMNIST, we consider $156$ clients with a total of $34,659$ training and $4,073$ test data samples. CIFAR-10 has $10$ classes of color images, with $50,000$ images for training and $10,000$ for test. For \mbox{CIFAR-10}, we consider a strong non-i.i.d. case with $100$ clients; each client only has one class of images that is randomly partitioned among all the clients with this image class. 
For both datasets, we train a convolutional neural network (CNN) that has the same architecture as the model in~\cite{WangJSAC2019} with over $400,000$ weights (i.e., $D > 400,000$). We fix the minibatch size to~$32$ and $\eta = 0.01$. The FL system is simulated, in which we define a \emph{normalized time} where the computation time in each round (for all clients in parallel) is fixed as $1$ and we vary the communication time of full gradient transmission\footnote{The communication time is defined as the time required for sending the entire $D$-dimensional gradient vector (both uplink and downlink) between all clients and the server. When sending less than $D$ elements of gradients, the communication time scales proportionally according to the actual number of elements sent, while assuming the uplink and downlink speeds are the same.}.  We mainly focus on FEMNIST except for the last experiment.

\subsection{Performance of FAB-top-$k$} \label{subsec:performance-fix-k}

We first evaluate our proposed FAB-top-$k$ approach with a fixed $k=1000$ and communication time of $10$. For comparison, we consider:
\begin{enumerate}
    \item \emph{Unidirectional top-$k$} GS where the downlink can include a maximum of $kN$ gradient elements~\cite{Deep-Gradient-Compression};
\item \emph{Fairness-unaware bidirectional top-$k$ (FUB-top-$k$)} GS that ignores the fairness aspect in FAB-top-$k$ and includes $k$ elements with largest absolute values in the downlink~\cite{gtopk-Sparsification}\footnote{Although this FUB-top-$k$ approach is similar with the global top-$k$ approach~\cite{gtopk-Sparsification}, note that we consider that all the gradients are transmitted to the server directly, because it is difficult to coordinate the direct exchange of gradients among pairs of clients in the FL setting due to firewall restrictions and possibly low bandwidth for peer-to-peer connection in WAN.},~\cite{Sattler2019};
    \item \emph{Periodic-$k$} GS that randomly selects $k$ elements~\cite{ALinear-Speedup-Analysis-of-Distributed-Deep-Learning,konevcny2016federated2};
    \item \emph{FedAvg} that sends the full gradient every $\left\lfloor D/(2k) \right\rfloor$ rounds\footnote{The division by $2$ is due to index transmission in GS.} which has the same average communication overhead as FAB-top-$k$ and FUB-top-$k$~\cite{mcmahan2016communication};
    \item \emph{Always-send-all} approach that always sends the full gradient in each training round $m$.
\end{enumerate}

The results in Fig.~\ref{fig:fix_k} show that FAB-top-$k$ performs better than all the other approaches, in terms of both the loss value and classification accuracy. In particular, the fact that we perform better than the send-all-or-nothing approach FedAvg~\cite{mcmahan2016communication} gives a positive answer to the second question in Section~\ref{sec:intro}. Compared to FUB-top-$k$ that gives a similar performance, our approach uses at least a certain number of gradient elements from each client and thus provides better fairness and avoids the possibility of some clients' data being completely ignored during the model training process (see Fig.~\ref{fig:fix_k} (right)).

\begin{figure}[t]
\centering

\begin{subfigure}[b]{0.9\columnwidth}
 \centering
 \includegraphics[width=\textwidth]{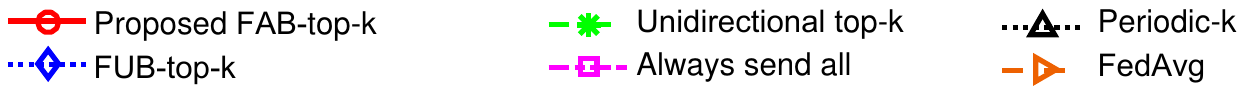}
\end{subfigure}
~
~

\begin{subfigure}[b]{0.3\columnwidth}
 \centering
 \includegraphics[width=\textwidth]{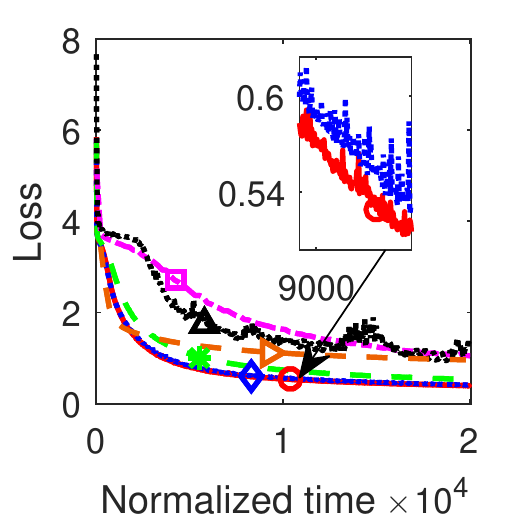}
\end{subfigure}
\begin{subfigure}[b]{0.3\columnwidth}
 \centering
 \includegraphics[width=\textwidth]{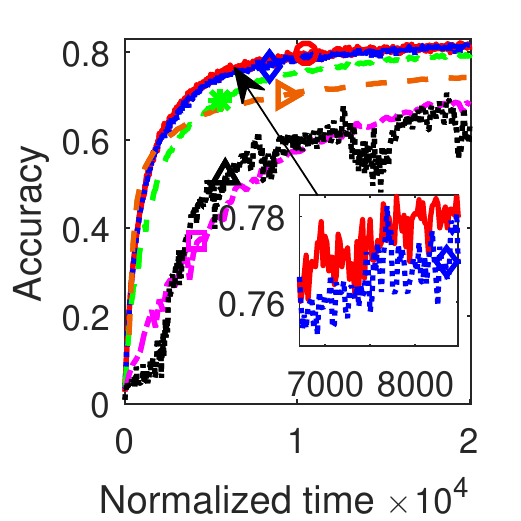}
\end{subfigure}
\begin{subfigure}[b]{0.33\columnwidth}
\centering
\includegraphics[width=1\textwidth]{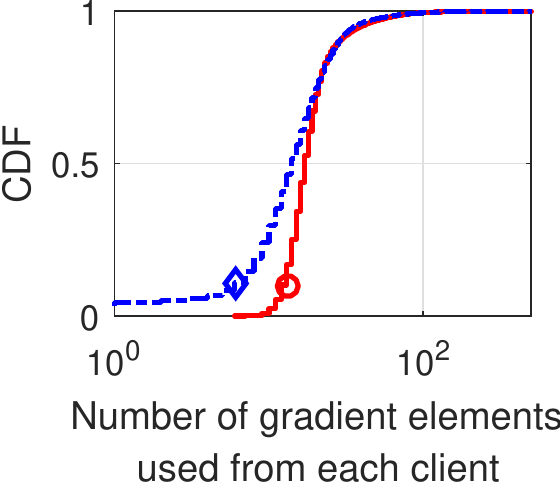}
\end{subfigure}
\caption{\label{fig:fix_k}  Performance of different GS methods with $k=1000$, communication time of $10$ on FEMNIST dataset. The markers on each curve are only used to map the curves to their legends, and the location of the marker on the curve is arbitrary and does not carry any specific meaning.}
\end{figure}

\begin{figure}[t]
\centering
\begin{subfigure}[b]{0.205\columnwidth}
 \centering
 \includegraphics[width=\textwidth]{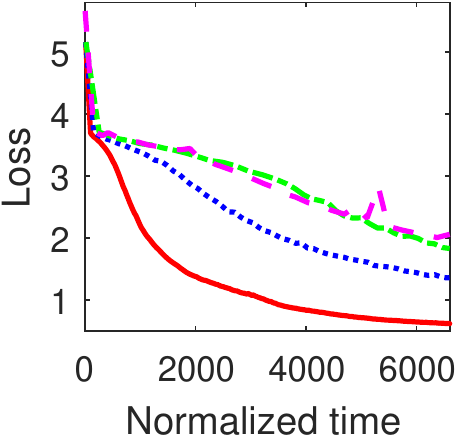}
\end{subfigure}
\begin{subfigure}[b]{0.22\columnwidth}
 \centering
 \includegraphics[width=\textwidth]{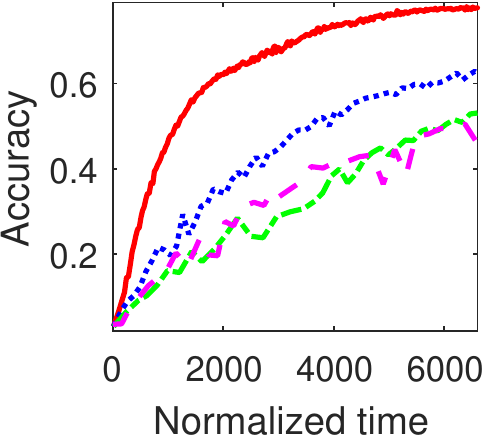}
\end{subfigure}
\begin{subfigure}[b]{0.37\columnwidth}
 \centering
 \includegraphics[width=\textwidth]{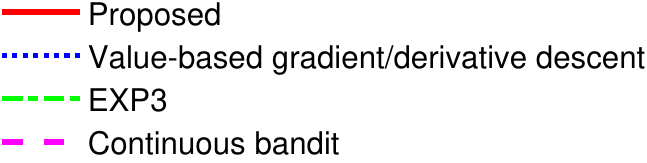}
 \vspace{0.06in}
\end{subfigure}

\begin{subfigure}[b]{0.24\columnwidth}
 \centering
 \includegraphics[width=\textwidth]{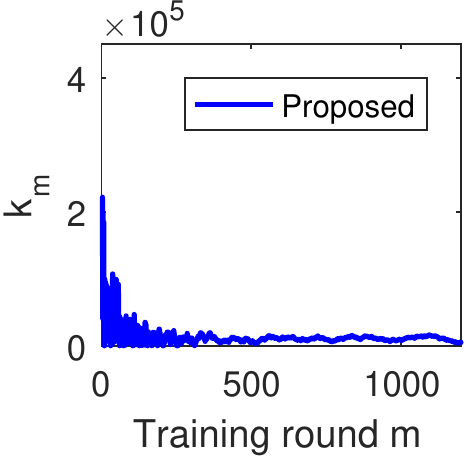}
\end{subfigure}
\begin{subfigure}[b]{0.24\columnwidth}
 \centering
 \includegraphics[width=\textwidth]{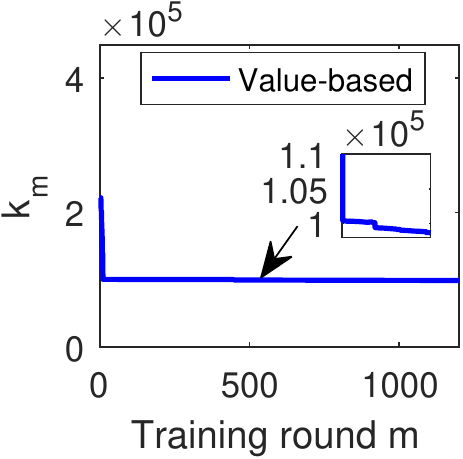}
\end{subfigure}
\centering
\begin{subfigure}[b]{0.24\columnwidth}
 \centering
 \includegraphics[width=\textwidth]{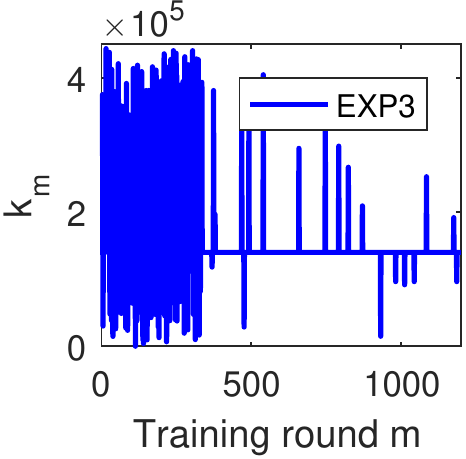}
\end{subfigure}
\begin{subfigure}[b]{0.24\columnwidth}
 \centering
 \includegraphics[width=\textwidth]{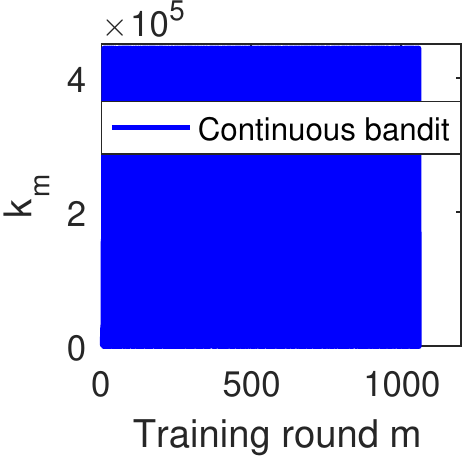}
\end{subfigure}
\caption{\label{fig:diffMethods} Performance of adaptive $k$ with different online learning methods (communication time: 10, dataset: FEMNIST).}
\end{figure}

\begin{figure}[t]
\centering

\begin{minipage}[b]{1\columnwidth}
\centering
\centering
\begin{subfigure}[b]{0.24\columnwidth}
 \centering
 \includegraphics[width=\textwidth]{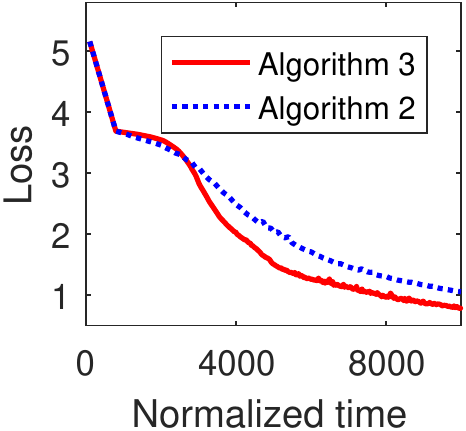}
\end{subfigure}
\begin{subfigure}[b]{0.24\columnwidth}
 \centering
 \includegraphics[width=\textwidth]{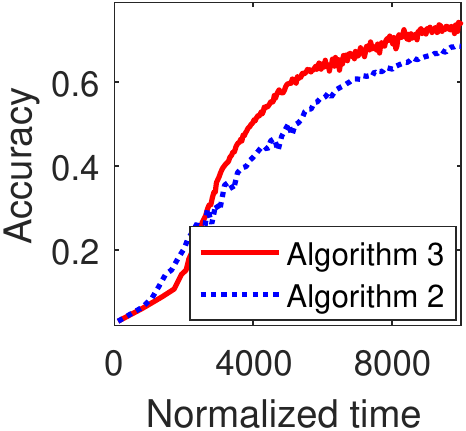}
\end{subfigure}
\centering
\begin{subfigure}[b]{0.24\columnwidth}
 \centering
 \includegraphics[width=\textwidth]{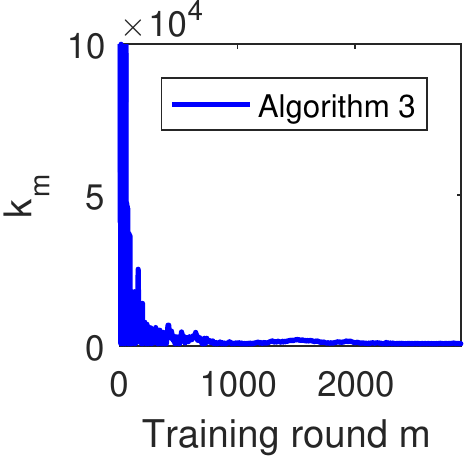}
\end{subfigure}
\begin{subfigure}[b]{0.24\columnwidth}
 \centering
 \includegraphics[width=\textwidth]{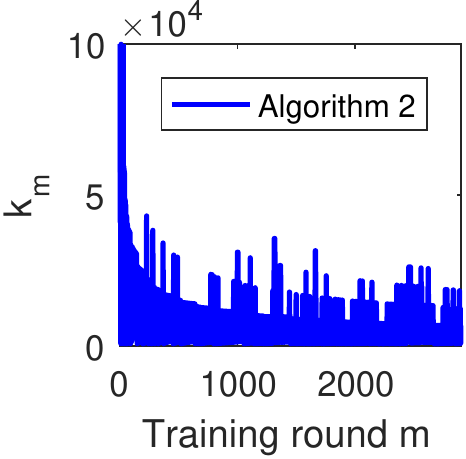}
\end{subfigure}
\caption{\label{fig:compSignOrig} Comparison between Algorithms~\ref{alg:onlineLearning} and \ref{alg:extendedOnlineLearning} (communication time: 100, dataset: FEMNIST).}
\end{minipage}
\vspace{-0.05in}

\begin{minipage}[b]{1\columnwidth}
\centering

\begin{subfigure}[b]{0.24\columnwidth}
 \centering
 \includegraphics[width=\textwidth]{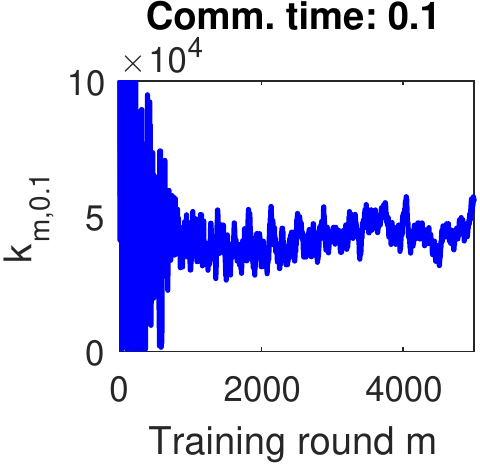}
\end{subfigure}
\begin{subfigure}[b]{0.24\columnwidth}
 \centering
 \includegraphics[width=\textwidth]{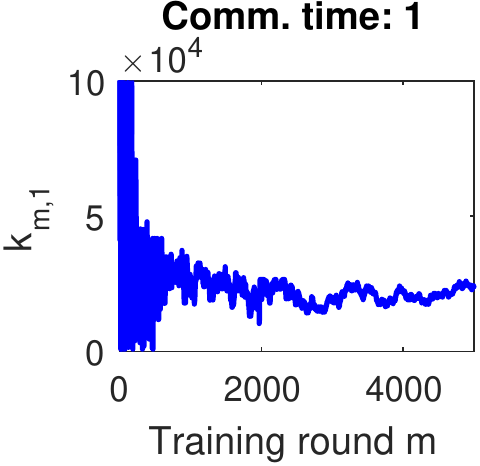}
\end{subfigure}
\begin{subfigure}[b]{0.24\columnwidth}
 \centering
 \includegraphics[width=\textwidth]{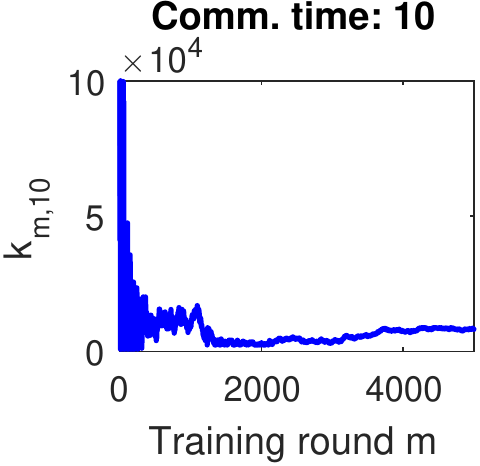}
\end{subfigure}
\begin{subfigure}[b]{0.24\columnwidth}
 \centering
 \includegraphics[width=\textwidth]{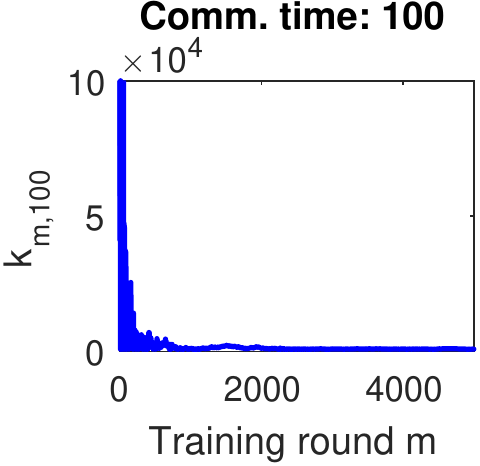}
\end{subfigure}

\begin{subfigure}[b]{0.5\columnwidth}
 \centering
 \includegraphics[width=\textwidth]{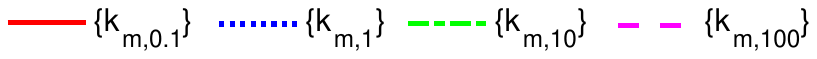}
\end{subfigure}
~
~

\begin{subfigure}[b]{0.255\columnwidth}
 \centering
 \includegraphics[width=\textwidth]{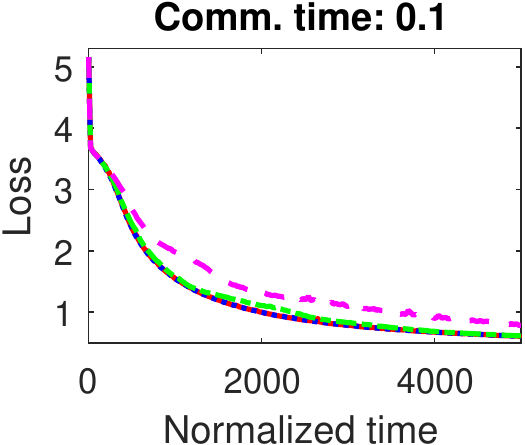}
\end{subfigure}
\begin{subfigure}[b]{0.22\columnwidth}
 \centering
 \includegraphics[width=\textwidth]{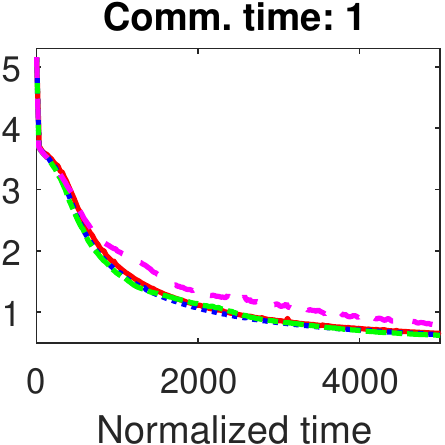}
\end{subfigure}
\begin{subfigure}[b]{0.22\columnwidth}
 \centering
 \includegraphics[width=\textwidth]{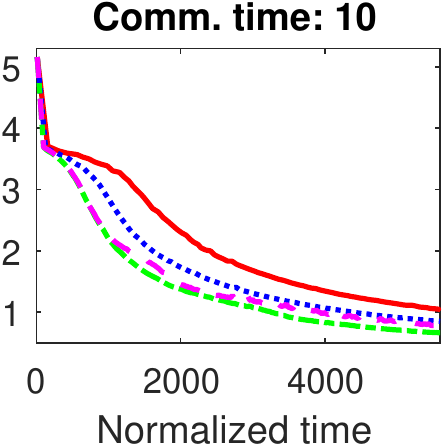}
\end{subfigure}
\begin{subfigure}[b]{0.22\columnwidth}
 \centering
 \includegraphics[width=\textwidth]{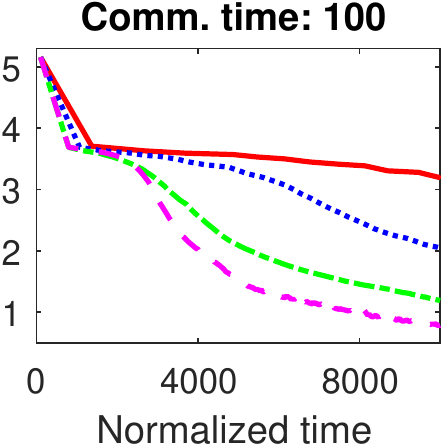}
\end{subfigure}
\vspace{-0.06in}

\begin{subfigure}[b]{0.242\columnwidth}
 \centering
 \includegraphics[width=\textwidth]{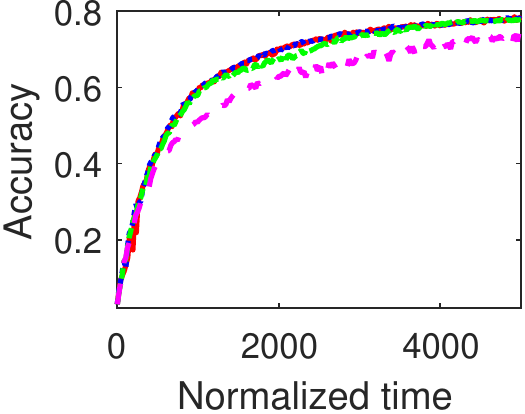}
\end{subfigure}
\begin{subfigure}[b]{0.22\columnwidth}
 \centering
 \includegraphics[width=\textwidth]{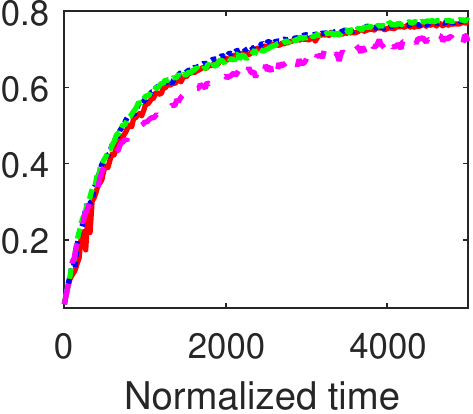}
\end{subfigure}
\begin{subfigure}[b]{0.22\columnwidth}
 \centering
 \includegraphics[width=\textwidth]{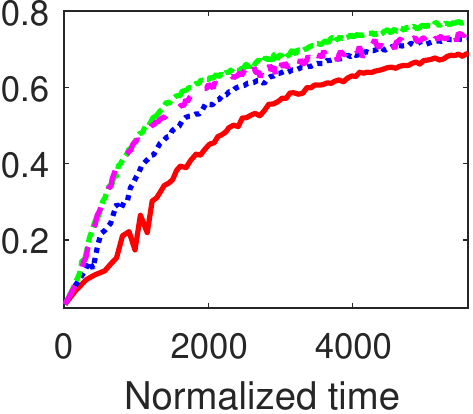}
\end{subfigure}
\begin{subfigure}[b]{0.22\columnwidth}
 \centering
 \includegraphics[width=\textwidth]{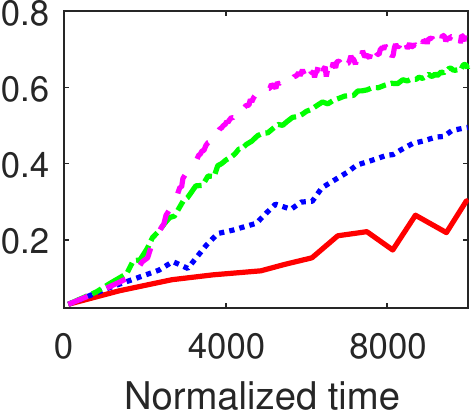}
\end{subfigure}
\caption{\label{fig:emnist_loss_accu} Performance of adaptive $k$ with proposed online learning method in Algorithm~\ref{alg:extendedOnlineLearning} (dataset: FEMNIST).}
\end{minipage}
\vspace{-0.05in}

\begin{minipage}[b]{1\columnwidth}
\centering

\begin{subfigure}[b]{0.24\columnwidth}
 \centering
 \includegraphics[width=\textwidth]{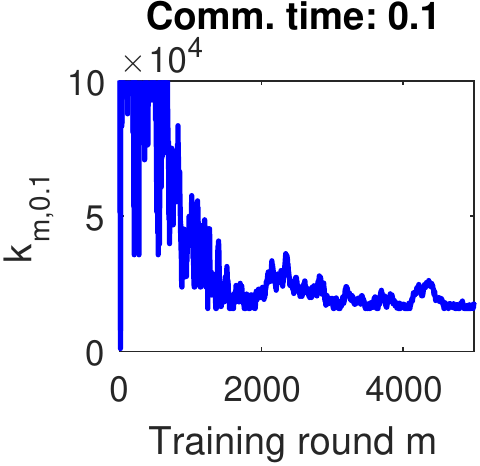}
\end{subfigure}
\begin{subfigure}[b]{0.24\columnwidth}
 \centering
 \includegraphics[width=\textwidth]{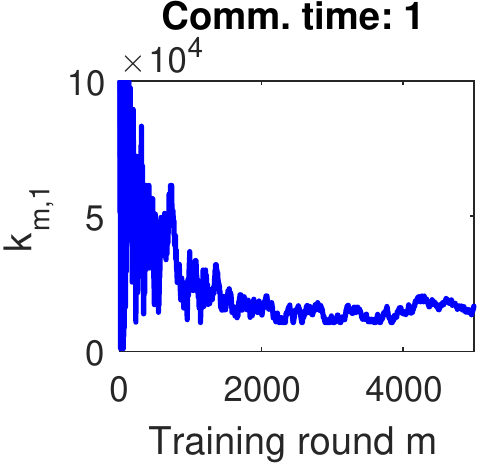}
\end{subfigure}
\begin{subfigure}[b]{0.24\columnwidth}
 \centering
 \includegraphics[width=\textwidth]{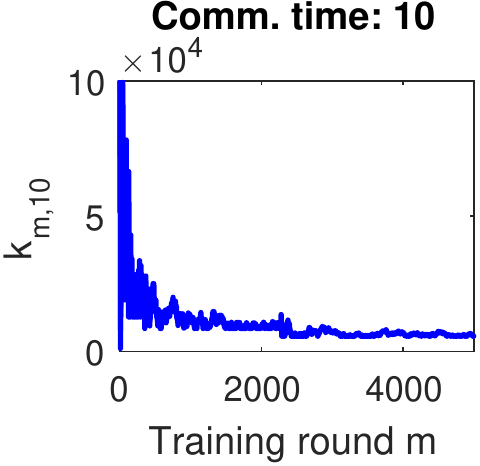}
\end{subfigure}
\begin{subfigure}[b]{0.24\columnwidth}
 \centering
 \includegraphics[width=\textwidth]{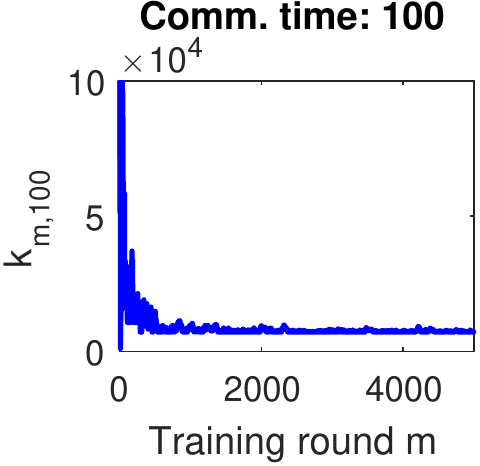}
\end{subfigure}

 \begin{subfigure}[b]{0.5\columnwidth}
 \centering
 \includegraphics[width=\textwidth]{legend_diffComm.pdf}
\end{subfigure}
~
~

\begin{subfigure}[b]{0.255\columnwidth}
 \centering
 \includegraphics[width=\textwidth]{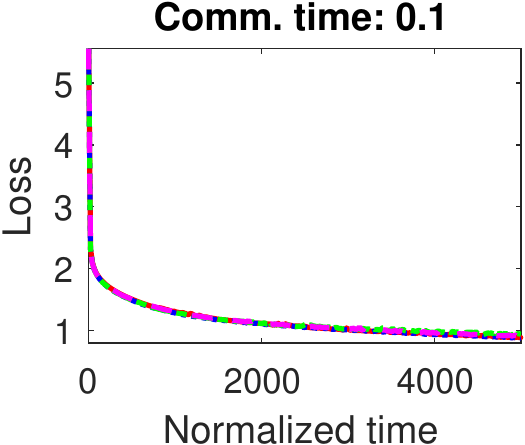}
\end{subfigure}
\begin{subfigure}[b]{0.22\columnwidth}
 \centering
 \includegraphics[width=\textwidth]{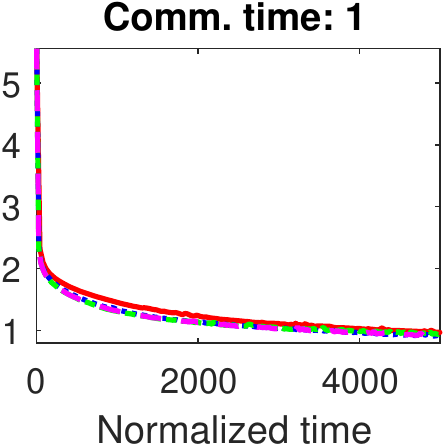}
\end{subfigure}
\begin{subfigure}[b]{0.22\columnwidth}
 \centering
 \includegraphics[width=\textwidth]{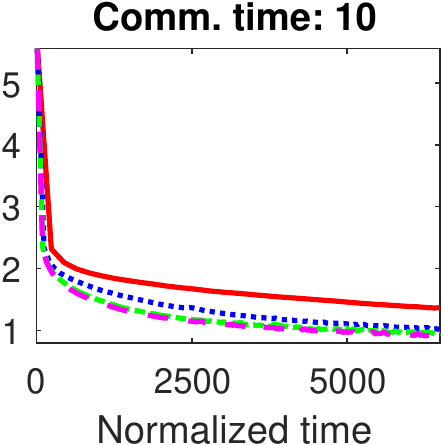}
\end{subfigure}
\begin{subfigure}[b]{0.22\columnwidth}
 \centering
 \includegraphics[width=\textwidth]{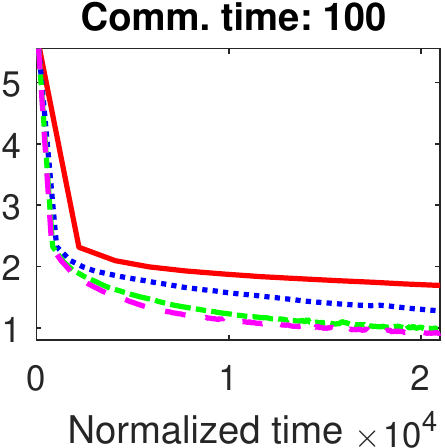}
\end{subfigure}
\vspace{-0.06in}

\begin{subfigure}[b]{0.242\columnwidth}
 \centering
 \includegraphics[width=\textwidth]{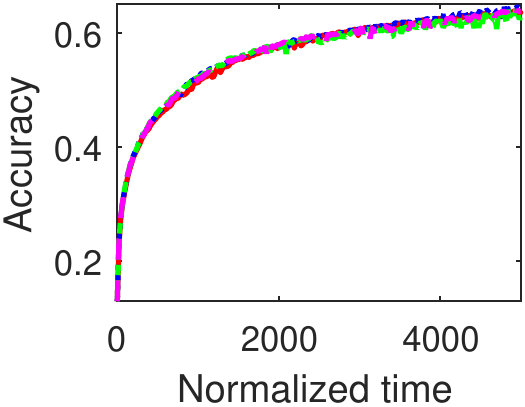}
\end{subfigure}
\begin{subfigure}[b]{0.22\columnwidth}
 \centering
 \includegraphics[width=\textwidth]{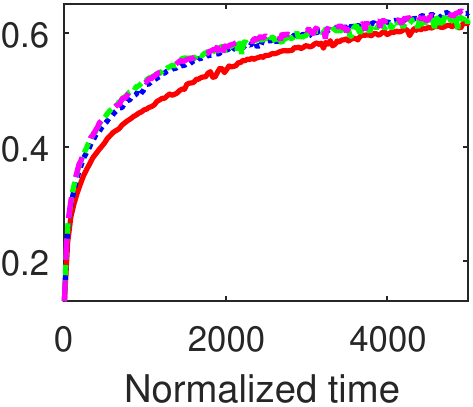}
\end{subfigure}
\begin{subfigure}[b]{0.22\columnwidth}
 \centering
 \includegraphics[width=\textwidth]{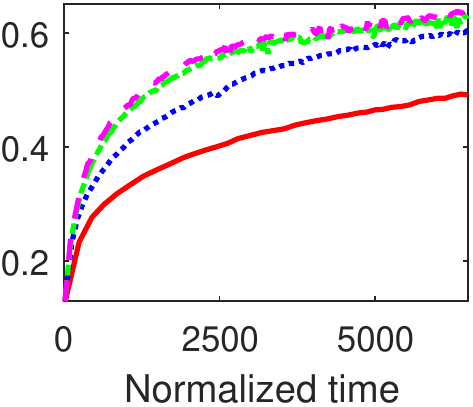}
\end{subfigure}
\begin{subfigure}[b]{0.22\columnwidth}
 \centering
 \includegraphics[width=\textwidth]{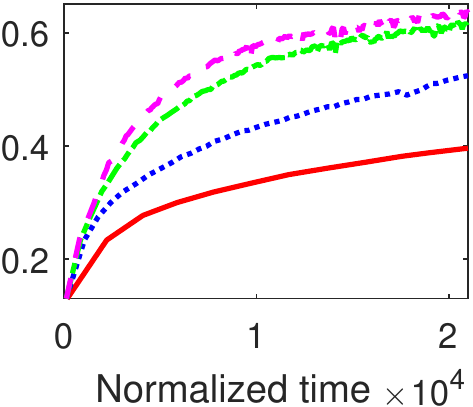}
\end{subfigure}
\caption{\label{fig:cifar_loss_accu} Performance of adaptive $k$ with proposed online learning method in Algorithm~\ref{alg:extendedOnlineLearning} (dataset: CIFAR-10).}
\end{minipage}
\end{figure}

\subsection{Performance of Online Learning for Adaptive $k$}
\label{subsec:performance-online}

We now apply the adaptive $k$ algorithm to FAB-top-$k$. We first compare our proposed approach (Algorithm~\ref{alg:extendedOnlineLearning}) with:
\begin{enumerate}
    \item \emph{Value-based gradient (derivative) descent}~\cite{hazan2016introduction}, where the derivative is estimated as in Section~\ref{subsec:overallProcedure} but without $\textrm{sign}(\cdot )$ operation and the update step size is $\delta_m$;
    \item \emph{EXP3} algorithm for MAB setting~\cite{auer2002nonstochastic}, where each integer value of $k$ is an arm in the bandit problem;
    \item \emph{Continuous bandit} setting~\cite{Online-convex-optimization-in-the-bandit}. 
\end{enumerate}
For our approach, we set $\alpha = 1.5$, $M_u = 20$, $k_\textrm{min}=0.002 \cdot D$, $k_\textrm{max}=D$. Parameters in the other approaches are set according to the same search range of $k$.
We see in Fig.~\ref{fig:diffMethods} that our proposed approach gives a better performance compared to all the other approaches and also a much more stable value of $k$ compared to EXP3 and continuous bandit.

The comparison between our proposed Algorithms~\ref{alg:onlineLearning} and \ref{alg:extendedOnlineLearning} with a large communication time of $100$ is shown in Fig.~\ref{fig:compSignOrig}, where we see that the extended approach in Algorithm~\ref{alg:extendedOnlineLearning} gives better performance and lower fluctuation in the values of $k$.

We now consider four different communication times, including $0.1$, $1$, $10$, and $100$. Let $\{k_{m,0.1}\}$, $\{k_{m,1}\}$, $\{k_{m,10}\}$, and $\{k_{m,100}\}$ denote the sequences of $k_m$ given by our proposed Algorithm~\ref{alg:extendedOnlineLearning} for each of these communication times, respectively. Figs.~\ref{fig:emnist_loss_accu} and \ref{fig:cifar_loss_accu} show the sequences of $k_m$ and the loss and accuracy values when applying different sequences of $k_m$ to each communication time, for FEMNIST and \mbox{CIFAR-10} datasets, respectively. 
In general, our algorithm uses a larger $k_m$ for a smaller communication time, as intuitively expected. For a specific communication time denoted by $\beta$, the sequence $\{k_{m,\beta}\}$ that is obtained for the same communication time $\beta$ gives the best performance\footnote{When the communication time is small with the CIFAR-10 dataset, the difference in loss and accuracy for different sequences of $k$ is small, because the way we assign samples to clients for CIFAR-10 dataset is highly non-i.i.d. and a relatively large value of $k$ is required even if for large communication time such as $100$, causing the difference between $\{k_{m,0.1}\}$, $\{k_{m,1}\}$, $\{k_{m,10}\}$, and $\{k_{m,100}\}$ to be smaller than for FEMNIST dataset.}. For example, in Fig.~\ref{fig:emnist_loss_accu}, when the communication time is $0.1$, $\{k_{m,0.1}\}$ gives a better performance than $\{k_{m,100}\}$; when the communication time is $100$, $\{k_{m,100}\}$ gives a better performance than $\{k_{m,0.1}\}$.  This shows that it is useful to adapt $k$ according to the communication/computation time and data/model characteristics; a single value (or sequence) of $k$ does not work well for all cases.

\section{Conclusion}\label{sec:conclusion}
In this paper, we have studied communication-efficient FL with adaptive GS. We have presented a FAB-top-$k$ approach which guarantees that each client provides at least $\lfloor k/N \rfloor$ gradient elements. To minimize the overall training time, we proposed a novel online learning formulation and algorithm using estimated derivative sign and adjustable search interval for determining the optimal value of $k$. Theoretical analysis of the algorithms and experimentation results using real-world datasets verify the effectiveness and benefits of our approaches over other existing techniques.

By replacing training time with another type of additive resource (e.g., energy, monetary cost), our online learning algorithm can be directly extended to the minimization of other resource consumption. Our proposed approach potentially also applies to other model compression techniques beyond GS, such as~\cite{caldas2018expanding,jiang2019model}. Future work can also consider heterogeneous client resources, where it may be beneficial to select a subset of clients in each training round and choose different $k$ for different clients, as well as the impact of GS on privacy leakage and its interplay with secure multi-party computation methods. The online learning framework proposed in this paper also sets a foundation for a broad range of optimization problems in federated and distributed learning systems.

\appendix

\subsection{Proof of Proposition~\ref{prop:tExistence}}
\label{appendix:ProofProp1}

Let $\gamma_k$ denote the time of an arbitrary training round (starting at an arbitrary loss $L'$) when using top-$k$ GS, and assume that the function $L(L')$ (for any $L'\leq L_0$) denoting the loss at the end of this training round is given for the same $k$ under consideration. Definition~\ref{def:learningTime} requires that
$
\gamma_k = \int_{L(L')}^{L'} \tilde{t}(k,l) dl = \int_a^{L'} \tilde{t}(k,l) dl - \int_a^{L(L')} \tilde{t}(k,l) dl
$,
where $a$ is an arbitrary constant. Taking the derivative w.r.t. $L'$ on both sides, we have
$ 0 = \tilde{t}(k,L') - \tilde{t}(k,L)\cdot \frac{dL}{dL'}$ 
which is equivalent to 
$\tilde{t}(k,L) = \tilde{t}(k,L') \cdot \frac{dL'}{dL}$.

When $L'=L_0$ (i.e., at model initialization), $\tilde{t}(k,l)$ for $l\in[L(L_0), L_0]$ can be constructed arbitrarily such that Definition~\ref{def:learningTime} holds. For $l < L(L_0)$, $\tilde{t}(k,l)$ can be defined recursively using $\tilde{t}(k,L) = \tilde{t}(k,L') \cdot \frac{dL'}{dL}$. 
Repeating this process for all $k$ proves the result.

\subsection{Proof of Theorem~\ref{theorem:RegretBound}}
\label{appendix:ProofTheorem1}

\begin{lemma} \label{lemma:sProductLargerThanZero}
For any $m=1,2,...,M$, we have $s_m (k_m \! -\! k^*) \!\geq\! 0$.
\end{lemma}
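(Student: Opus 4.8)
The plan is to exploit the convexity of $\tau_m(\cdot)$ (Item a) in Assumption~\ref{assumption:convexity}) together with the fact that $k^*$ minimizes $\sum_m \tau_m(k)$ over the search interval. The key observation is that since $t(k,l)$ attains its minimum at the same $k$ for all $l$ (Item c) in Assumption~\ref{assumption:convexity}), the single-round cost $\tau_m(k)$ is minimized at the same point $k^*$ for every $m$; hence $k^*$ is simultaneously the minimizer of each individual $\tau_m$. I would first argue that $\tau'_m(k^*) = 0$ for each $m$ (or, more carefully, that $k^*$ is a critical/boundary minimizer so that $\tau'_m(k)$ has the ``right'' sign relative to $k-k^*$).

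The main steps, in order: First, recall $\tau_m(k) = \int_{L_m}^{L_{m-1}} t(k,l)\,dl$ and $\tau'_m(k) = \int_{L_m}^{L_{m-1}} \frac{\partial t(k,l)}{\partial k}\,dl$. Second, observe that by Item c) of Assumption~\ref{assumption:convexity}, there is a common minimizer $k^\dagger := \arg\min_{k\in[1,D]} t(k,l)$ independent of $l$, and (since $k^* \in \mathcal{K}$ minimizes $\sum_m \tau_m$, and each $\tau_m$ is minimized at $k^\dagger$ if $k^\dagger \in \mathcal{K}$) we get $k^* = \mathcal{P}_\mathcal{K}(k^\dagger)$, the projection of the common minimizer onto $\mathcal{K}$. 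Third, use convexity of $t(\cdot,l)$ in $k$: for a convex differentiable function with (projected) minimizer $k^*$ on $\mathcal{K}$, the derivative $\frac{\partial t(k,l)}{\partial k}$ is $\geq 0$ for $k \geq k^*$ and $\leq 0$ for $k \leq k^*$ on $\mathcal{K}$ — this is the standard monotonicity of the derivative of a convex function combined with the first-order optimality condition. Fourth, integrate over $l \in [L_m, L_{m-1}]$: if $k_m \geq k^*$ then $\frac{\partial t(k_m,l)}{\partial k} \geq 0$ for all $l$, so $\tau'_m(k_m) \geq 0$, giving $s_m \geq 0$ and thus $s_m(k_m - k^*) \geq 0$; symmetrically if $k_m \leq k^*$ then $\tau'_m(k_m) \leq 0$, so $s_m \leq 0$ and again $s_m(k_m - k^*) \geq 0$. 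The case $k_m = k^*$ gives $k_m - k^* = 0$ and the product is trivially $0$.

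The step I expect to be the main obstacle is handling the \emph{boundary} case cleanly: when $k^* = k_\textrm{min}$ or $k^* = k_\textrm{max}$ (i.e., the unconstrained minimizer $k^\dagger$ lies outside $\mathcal{K}$), the derivative $\tau'_m(k^*)$ need not vanish, and one must invoke the first-order optimality condition for constrained convex minimization — namely that $\tau'_m(k)(k - k^*) \geq 0$ for all $k \in \mathcal{K}$ — rather than simply setting the derivative to zero. I would also need to be slightly careful that $\mathrm{sign}(\cdot)$ can return $0$ (when the derivative integral is exactly zero on a flat region), but this only makes $s_m(k_m-k^*) = 0 \geq 0$, so it causes no problem. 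Everything else is routine once convexity and the common-minimizer assumption are in place.
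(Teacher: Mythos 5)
Your proposal is correct and follows essentially the same route as the paper: by Items \emph{a)} and \emph{c)} of Assumption~\ref{assumption:convexity}, each $\tau_m(k)$ is convex in $k$ and minimized at the same point $k^*$, so $\tau'_m(k_m)$ (and hence $s_m$) has the same sign as $k_m - k^*$, giving $s_m(k_m - k^*)\geq 0$. The only difference is that you argue pointwise in $l$ and integrate, and you add the projection/boundary discussion, which is harmless extra care but not needed since the paper assumes $k^*\in\mathcal{K}$ by construction of the search interval.
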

\begin{prf}
According to Items \emph{a)} and \emph{c)} in Assumption~\ref{assumption:convexity}, $\tau_m(k)$ is convex in $k$, and $k^*$ minimizes $\tau_m(k)$ for any $m$. Hence, we have $s_m \geq 0$ if $k_m \geq k^*$ and $s_m \leq 0$ if $k_m \leq k^*$, thus $s_m (k_m - k^*) \geq 0$ for all $m$.
\end{prf}

\begin{lemma} \label{lemma:performanceAnalysisNonRandomConvex1}
For any $m=1,2,...,M$, we have
\begin{align}
\tau_m(k_m) - \tau_m(k^*) \leq G s_m  \cdot (k_m - k^*). \label{eq:performanceAnalysisNonRandomConvex1} 
\end{align}
\end{lemma}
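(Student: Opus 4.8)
The plan is to derive the inequality from three ingredients already available: the convexity of $\tau_m(\cdot)$ (Item~\emph{a)} of Assumption~\ref{assumption:convexity}), the derivative bound $|\tau'_m(\cdot)| \le G$ from~(\ref{eq:GDef}), and the sign-alignment property $s_m(k_m - k^*) \ge 0$ from Lemma~\ref{lemma:sProductLargerThanZero}.

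First I would invoke convexity and differentiability of $\tau_m$ (the latter because $\tau'_m(k) = \int_{L_m}^{L_{m-1}} \frac{\partial t(k,l)}{\partial k}\,dl$ is well defined) to write the first-order (subgradient) inequality at the point $k_m$:
\begin{equation*}
\tau_m(k^*) \;\ge\; \tau_m(k_m) + \tau'_m(k_m)\,(k^* - k_m),
\end{equation*}
which rearranges to $\tau_m(k_m) - \tau_m(k^*) \le \tau'_m(k_m)\,(k_m - k^*)$. Next I would rewrite $\tau'_m(k_m) = s_m\,|\tau'_m(k_m)|$, an identity that holds for each of the three possible values $s_m \in \{-1,0,1\}$, so that $\tau'_m(k_m)\,(k_m - k^*) = |\tau'_m(k_m)|\cdot s_m(k_m - k^*)$. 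Finally, since $s_m(k_m - k^*) \ge 0$ by Lemma~\ref{lemma:sProductLargerThanZero} and $|\tau'_m(k_m)| \le G$ by~(\ref{eq:GDef}), multiplying the nonnegative factor $s_m(k_m - k^*)$ by a quantity no larger than $G$ gives $|\tau'_m(k_m)|\cdot s_m(k_m - k^*) \le G\,s_m(k_m - k^*)$. Chaining these relations yields $\tau_m(k_m) - \tau_m(k^*) \le G\,s_m(k_m - k^*)$, as desired.

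There is no substantive obstacle here; the argument is a short chain of inequalities. The only points needing a little care are the degenerate case $s_m = 0$ (in which $\tau'_m(k_m) = 0$, so $k_m$ is itself a minimizer of the convex $\tau_m$ and both sides vanish) and the correct bookkeeping of signs — specifically using the identity $\tau'_m(k_m) = s_m|\tau'_m(k_m)|$ rather than a careless $|\tau'_m(k_m)|\,\mathrm{sign}(\cdot)$ manipulation — so that the bound $|\tau'_m(k_m)| \le G$ is applied to the already-nonnegative product $s_m(k_m - k^*)$.
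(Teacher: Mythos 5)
Your proposal is correct and follows essentially the same route as the paper's proof: the first-order convexity inequality at $k_m$, the identity $\tau'_m(k_m) = s_m\,|\tau'_m(k_m)|$, and then the bounds $|\tau'_m(k_m)| \leq G$ from~(\ref{eq:GDef}) together with $s_m(k_m - k^*) \geq 0$ from Lemma~\ref{lemma:sProductLargerThanZero}. Your explicit handling of the degenerate case $s_m = 0$ is a minor added care the paper leaves implicit, but the argument is the same.
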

\begin{prf}
Due to the convexity of $t_m(\cdot)$, we have
$
\tau_m(k_m) - \tau_m(k^*) \leq \tau'_m(k_m) \cdot (k_m - k^*)  =  s_m \left| \tau'_m(k_m)  \right| \cdot (k_m - k^*)  
$
The result follows by noting that $\left| \tau'_m(k_m)  \right| \leq G$ according to~(\ref{eq:GDef}) and $s_m (k_m - k^*) \geq 0$ from Lemma~\ref{lemma:sProductLargerThanZero}.
\end{prf}

\begin{lemma} \label{lemma:performanceAnalysisNonRandom1}
For any $m=1,2,...,M$, we have
\begin{align}
s_m (k_m - k^*) & \leq \frac{(k_m - k^*)^2 - (k_{m+1} - k^*)^2}{2\delta_m} + \frac{\delta_m}{2}
\label{eq:performanceAnalysisNonRandom1}
\end{align}
where $k_{m+1} := \mathcal{P}_\mathcal{K}(k_m - \delta_m s_m)$ for all $m=1,2,...,M$. 
\end{lemma}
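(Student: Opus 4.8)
The plan is to run the standard online (sub)gradient descent one-step analysis, specialized to the scalar decision variable $k$ and the fact that here the ``gradient'' used in the update is the sign $s_m \in \{-1,0,1\}$ rather than the full derivative. The key structural fact I would use is that the Euclidean projection $\mathcal{P}_\mathcal{K}$ onto the convex interval $\mathcal{K}$ is non-expansive, together with $k^* \in \mathcal{K}$.

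First I would start from the update rule $k_{m+1} = \mathcal{P}_\mathcal{K}(k_m - \delta_m s_m)$ and apply non-expansiveness of the projection relative to the fixed point $k^* \in \mathcal{K}$:
\[
(k_{m+1} - k^*)^2 = \bigl(\mathcal{P}_\mathcal{K}(k_m - \delta_m s_m) - \mathcal{P}_\mathcal{K}(k^*)\bigr)^2 \leq (k_m - \delta_m s_m - k^*)^2 .
\]
Next I would expand the right-hand side, $(k_m - \delta_m s_m - k^*)^2 = (k_m - k^*)^2 - 2\delta_m s_m (k_m - k^*) + \delta_m^2 s_m^2$, and rearrange to isolate the cross term:
\[
2\delta_m s_m (k_m - k^*) \leq (k_m - k^*)^2 - (k_{m+1} - k^*)^2 + \delta_m^2 s_m^2 .
\]
Dividing both sides by $2\delta_m > 0$ gives $s_m (k_m - k^*) \leq \frac{(k_m - k^*)^2 - (k_{m+1} - k^*)^2}{2\delta_m} + \frac{\delta_m s_m^2}{2}$.

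Finally I would use that $s_m = \mathrm{sign}(\tau'_m(k_m)) \in \{-1,0,1\}$, so $s_m^2 \leq 1$, which bounds the last term by $\delta_m / 2$ and yields (\ref{eq:performanceAnalysisNonRandom1}). I do not anticipate a real obstacle here; the only points requiring a word of care are that $k^*$ genuinely lies in $\mathcal{K}$ (assumed when the search interval is defined) so that $\mathcal{P}_\mathcal{K}(k^*) = k^*$, and that $\delta_m > 0$ so the division preserves the inequality direction. Everything else is routine algebra.
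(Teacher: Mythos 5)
Your proposal matches the paper's own proof step for step: the projection's non-expansiveness toward $k^* \in \mathcal{K}$, the expansion of the square, the bound $s_m^2 \leq 1$, and the rearrangement/division by $2\delta_m$ are exactly the argument given in the appendix. It is correct and complete as written.
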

\begin{prf}
We note that
{\small
\begin{align*}
(k_{m+1} - k^*)^2 & = \left( \mathcal{P}_\mathcal{K}(k_m - \delta_m s_m) - k^*  \right)^2 \\
& \leq ( k_m - \delta_m s_m - k^* )^2 \tag{$k^* \in \mathcal{K}$ by definition} \\
& = (k_m - k^*)^2 + \delta_m^2 s_m^2 -2\delta_m s_m (k_m - k^*) \\
& \leq (k_m - k^*)^2 + \delta_m^2  -2\delta_m s_m (k_m - k^*) \tag{$s_m \in \{-1,0,1\}$, thus $s_m^2 \leq 1$}
\end{align*}
}%
Rearranging the inequality gives the result.
\end{prf}

Note that the definition of $k_m$ in Lemma~\ref{lemma:performanceAnalysisNonRandom1} includes $k_{M+1}$ for analysis later, although Algorithm~\ref{alg:onlineLearning} stops at $m=M$.

\begin{proof}[Proof of Theorem~\ref{theorem:RegretBound}]
Combining Lemmas~\ref{lemma:performanceAnalysisNonRandomConvex1} and \ref{lemma:performanceAnalysisNonRandom1}, we have
{\small
\begin{align}
R(M) &= \sum_{m=1}^M \left( \tau_m(k_m) - \tau_m(k^*) \right) \nonumber \\
& \leq G \sum_{m=1}^M  \frac{(k_m - k^*)^2 - (k_{m+1} - k^*)^2}{2\delta_m} + \frac{G}{2}\sum_{m=1}^M \delta_m \nonumber \\
& \leq G \sum_{m=1}^M (k_m - k^*)^2 \left( \frac{1}{2\delta_m} - \frac{1}{2\delta_{m-1}}\right)+ \frac{G}{2}\sum_{m=1}^M \delta_m \tag{$\frac{1}{\delta_0} := 0$, $-(k_{M+1} - k^*)^2 \leq 0$} \nonumber \\
& \leq  G B^2 \sum_{m=1}^M  \left( \frac{1}{2\delta_m} - \frac{1}{2\delta_{m-1}}\right)+ \frac{G}{2}\sum_{m=1}^M \delta_m \nonumber \tag{$0 \leq (k_m - k^*)^2 \leq B^2$, $\frac{1}{2\delta_m} - \frac{1}{2\delta_{m-1}} > 0$} \\
& = \frac{G B^2}{2 \delta_M} + \frac{G}{2}\sum_{m=1}^M \delta_m 
\leq G B \sqrt{2M} \nonumber
\end{align}
}%
where the last inequality is because $\delta_m := \frac{B}{\sqrt{2m}}$ and $\sum_{m=1}^M \frac{1}{\sqrt{m}} \leq 2 \sqrt{M}$.
\end{proof}

\subsection{Proof of Theorem~\ref{theorem:ExpectedRegretBound}}
\label{appendix:ProofTheorem2}

We have
{\small
\begin{align}
\tau_m(k_m) \! -\! \tau_m(k^*) 
& \leq G s_m  \cdot (k_m - k^*) \label{eq:performanceAnalysisRandomConvex} \\
& = G H_m \!\cdot\! \Expect[\hat{s}_m | k_1,...,k_m] \!\cdot\! (k_m \!-\! k^*) \label{eq:performanceAnalysisRandom2}\\
& \leq G H \cdot \Expect[\hat{s}_m | k_1,...,k_m] \cdot (k_m\! - \! k^*)  \label{eq:performanceAnalysisRandom3}\\
& = G H \cdot \Expect[\hat{s}_m  (k_m - k^*) | k_1,...,k_m]  \label{eq:performanceAnalysisRandom4}
\end{align}}%
where (\ref{eq:performanceAnalysisRandomConvex}) is from Lemma~\ref{lemma:performanceAnalysisNonRandomConvex1}; (\ref{eq:performanceAnalysisRandom2}) follows from (\ref{eq:sExpect}); (\ref{eq:performanceAnalysisRandom3}) is from $1 \leq H_m \leq H$ and $\Expect[\hat{s}_m | k_1,...,k_m] \cdot (k_m - k^*) \geq 0$, because $\Expect[\hat{s}_m | k_1,..,k_m]$ has the same sign as $s_m$, and $s_m (k_m - k^*) \geq 0$ (Lemma~\ref{lemma:sProductLargerThanZero}); (\ref{eq:performanceAnalysisRandom4}) is obtained by the property of conditional expectation that $\Expect[XY|Y] = Y\Expect[X|Y]$ for any random variables $X$ and $Y$.

Then, the expected regret is equal to
{\small
\begin{align}
&\!\!\!\!\! \Expect[R(M)]  = \Expect\left[ \sum_{m=1}^M \left( \tau_m(k_m) - \tau_m(k^*) \right) \right] \nonumber \\
&\!\!\!\!\!\! \leq \! \Expect \!\left[ \sum_{m=1}^M \! G H \!\cdot\! \Expect \! \left[\! \frac{(k_m \! -\! k^*)^2 \! -\! (k_{m+1}\! -\! k^*)^2}{2\delta_m} \! +\! \frac{\delta_m}{2} \bigg| k_1,\! ...,k_m \!\right] \!\! \right] \label{eq:performanceAnalysisRandomFinal1} \\
&\!\!\!\!\!\! \leq G H \sum_{m=1}^M \Expect\left[ \frac{(k_m - k^*)^2 - (k_{m+1} - k^*)^2}{2\delta_m} + \frac{\delta_m}{2} \right] \label{eq:performanceAnalysisRandomFinal2}  \\
&\!\!\!\!\!\! = G   H  \!\cdot\! \Expect\!\left[ \sum_{m=1}^M \!\frac{(k_m\! -\! k^*)^2 \!-\! (k_{m+1} \!-\! k^*)^2}{2\delta_m} \!\right] \!\! +\!  \frac{G H}{2} \!\!\sum_{m=1}^M \!\! \delta_m  \label{eq:performanceAnalysisRandomFinal3}  \\
&\!\!\!\!\!\! \leq  G H B \sqrt{2M}  \label{eq:performanceAnalysisRandomFinal4}  
\end{align}}%
where (\ref{eq:performanceAnalysisRandomFinal1}) is from (\ref{eq:performanceAnalysisRandom4}) and replacing $s_m$ with $\hat{s}_m$ in Lemma~\ref{lemma:performanceAnalysisNonRandom1} (it is easy that the same result of Lemma~\ref{lemma:performanceAnalysisNonRandom1} holds after this replacement); 
(\ref{eq:performanceAnalysisRandomFinal2}) is obtained by the linearity of expectation and the law of total expectation;
(\ref{eq:performanceAnalysisRandomFinal3}) is from the linearity of expectation and that $\delta_m$ is deterministic;  (\ref{eq:performanceAnalysisRandomFinal4}) is obtained by a similar procedure as in the proof of Theorem~\ref{theorem:RegretBound}.

\bibliographystyle{IEEEtran}
\bibliography{ref}

\end{document}